\def\showmain{1}
\def\showappA{1}
\def\showappB{1}
\def\recompileSpecialFig{1}
\title{Double Descent and Other Interpolation \\Phenomena in GANs}
\author{\name Lorenzo Luzi \email enzo@rice.edu \\
      \addr Department of Electrical and Computer Engineering\\Rice University
      \AND
      \name Yehuda Dar \email ydar@bgu.ac.il \\
      \addr Department of Computer Science\\Ben-Gurion University
      \AND
      \name Richard G. Baraniuk \email richb@rice.edu\\
      \addr Department of Electrical and Computer Engineering\\Rice University}
\pgfplotsset{compat=1.15}
\newcommand\colorOne{violet}
\newcommand\colorTwo{orange}
\newcommand\colorThree{black}
\definecolor{colorSIX1}{rgb}{0.53, 0.0, 0.69}
\definecolor{colorSIX6}{rgb}{0.98, 0.6, 0.01}
\definecolor{colorSIX2}{RGB}{166,206,227}
\definecolor{colorSIX3}{RGB}{51,160,44}
\definecolor{colorSIX4}{RGB}{31,120,180}
\definecolor{colorSIX5}{RGB}{178,223,138}
\tikzstyle{verticalDashed}=[
\newcommand\SUP{\text{sup}}
\newcommand\PS{\text{ps}}
\newcommand\UNSUP{\text{unsup}}
\newcommand\TRUE{\text{true}}
\newcommand\TRAIN{\text{train}}
\newcommand\TEST{\text{test}}
\newcommand\LATENTSPACE{Latent dimensionality $k$}
\newcommand{\solUnsupT}{\mcS_{\text{\UNSUP}}^\transp}
\newcommand{\solUnsupPinv}{\mcS_{\text{\UNSUP}}^\pinv}
\newcommand{\GL}{\text{GL}}
\newcommand{\solGL}{\mcS_{\text{GL}}}
\newcommand{\solOrth}{\mcS_{\text{orth}}}
\newcommand{\solPinv}{\mcS_\PS^\pinv}
\newcommand{\solT}{\mcS_\PS^\transp}
\begin{document}

\if\showmain1

\maketitle

\begin{abstract}
We study overparameterization in generative adversarial networks (GANs) that can interpolate the training data. We show that \textit{overparameterization can improve generalization performance and accelerate the training process}. We study the generalization error as a function of latent space dimension and identify two main behaviors, depending on the learning setting. First, we show that overparameterized generative models that learn distributions by minimizing a metric or $f$-divergence do not exhibit double descent in generalization errors; specifically, \textit{all} the interpolating solutions achieve the same generalization error. Second, we develop a novel pseudo-supervised learning approach for GANs where the training utilizes pairs of fabricated (noise) inputs in conjunction with real output samples. Our pseudo-supervised setting exhibits double descent (and in some cases, triple descent) of generalization errors. We combine pseudo-supervision with overparameterization (i.e., overly large latent space dimension)  to accelerate training while matching or even surpassing generalization performance without pseudo-supervision. While our analysis focuses mostly on linear models, we also apply important insights for improving generalization of nonlinear, multilayer GANs.
\end{abstract}

\section{Introduction}

Generative adversarial networks (GANs) \citep{goodfellow2014generative} are a prominent concept for addressing data generation tasks in contemporary machine learning. GANs learn a data generator model that produces new instances from a data class represented by a set of training examples. A GAN's generator network is trained in conjunction with a discriminator network that evaluates the generator's ability and directs it towards better performance. GANs have an intricate design and training philosophy whose theory and practice are still far from being sufficiently understood.

A key aspect that complicates the understanding of GANs is that, like many other deep learning architectures, they are highly complex models with typically many more parameters than the number of training data samples. Therefore, GANs are {\em overparameterized models} that can be trained to interpolate (i.e., memorize) their training examples.
Yet, overparameterized GANs are capable of generating high quality data beyond their training datasets. 
The analysis of overparameterized machine learning is a highly active research area that is mainly focused on supervised learning problems such as regression~\citep{belkin2019two,bartlett2020benign,muthukumar2020harmless,dAscoli2020double} and classification~\citep{muthukumar2020classification,deng2019model,kini2020analytic}. 
{\em The study of overparameterization in the unsupervised learning and data generation problems relevant to GANs is uncharted territory that we are first to explore in this paper.}

This paper develops a new framework for the study of generalization and overparameterization in GANs and PCA. 
\textit{We examine the generalization of linear GANs at different parameterization levels by varying the latent space dimension}, which in a GAN is the dimension of the input (random noise) vectors to the data generator. 
This is a practical way of controlling the parameterization of our models, since we do not need to consider modifying the width or depth of the generator network.
Our framework leads us to the following key insights on how the generalization performance of overparameterized linear GANs is affected by the training approach.

First, GAN training via \textbf{minimization of a distribution metric or $f$-divergence} results in \textit{unsatisfactory generalization performance}  when the generator model is overparameterized and interpolates its noisy training data.
Specifically, we prove that under such a training process \textit{all} overparameterized solutions have the same generalization performance. 
Moreover, the best generalization is obtained by an underparameterized solution with the same dimension as the true latent space dimension of the data, which is usually unknown. 
This set of interpolating solutions which have constant test error establishes a new generalization behavior of generative models.
	
Second, our theoretical studies inspire a new \textbf{pseudo-supervised training} regime for GANs and show that it can \textit{improve generalization performance in overparameterized settings} where interpolation of noisy training data occurs. 
Our pseudo-supervised approach selects a subset (or all) of the training data examples and individually associates them with random (noise) vectors that act as their latent representations (i.e., the inputs given to the generator to yield the respective training data). 
Pseudo-supervision \textit{accelerates the training process} and improves generalization by reducing the number of effective degrees of freedom in overparameterized GAN learning (although in many cases the learned GAN can still interpolate the training data). 
We develop several implementations for the pseudo-supervised optimization objective and examine their respective generalization behaviors, which we show to include \textit{double descent} and also \textit{triple descent} of generalization errors as a function of the latent space dimension of the learned GAN.

Third, encouraged by our new insights into linear GANs, we explore their implications for \textit{nonlinear, multilayer} GANs. 
Specifically, we implement and study our pseudo-supervised learning scheme for a gradient-penalized Wasserstein GAN~\citep{gulrajani2017improved} on the MNIST digit dataset of binary images. 
Our results demonstrate that  pseudo-supervised learning significantly improves generalization performance and accelerates training when compared to training the same GAN without pseudo-supervision.

\section{Related work}
\label{sec:related work}

GANs~\citep{goodfellow2014generative} have been very successful in modeling complex data distributions, such as distributions of images~\citep{biggan,styleGAN,styleGAN2}. These models are usually trained by having two competing networks: a generator network which attempts to approximate the data distribution and a discriminator network which attempts to classify between data from the training set and generated data. The objective function can either be an $f$-divergence~\citep{goodfellow2014generative,nowozin2016f,sarraf2021rgan} or a metric~\citep{wgan,gulrajani2017improved} and is typically minimized by the generator while simultaneously being maximized by the discriminator. This minmax game can be unstable~\citep{salimans2016improved,mescheder2018training} and is hard to analyze in full generality; therefore we turn to linear GANs. 

\cite{feizi2020understanding} have studied GANs with linear generators, quadratic discriminators, and Gaussian data (this has been named the LQG setting).

In this setting, the objective loss is the $2$-Wasserstein distance between two Gaussian distributions $\mcN(\vmu_1, \mSig_1)$ and $\mcN(\vmu_2, \mSig_2)$:
\begin{align}
    \label{eq:2-wasserstein distance}
    &\mcW_2^2(\mcN(\vmu_1, \mSig_1), \mcN(\vmu_2, \mSig_2))
    =
    \| \vmu_1 - \vmu_2\|_2^2
    \\ \nonumber 
    &\hphantom{---} +
    \trace(\mSig_1) + \trace(\mSig_2) - 2 \trace\Big(  \big(\mSig_1^\frac{1}{2} \mSig_2 \mSig_1^\frac{1}{2}\big)^\frac{1}{2} \Big).
\end{align}
This distance is well known~\citep{givens1984class,olkin1982distance} and is even used in the calculation of the well known evaluation metric FID~\citep{FID} in the GAN literature. One result in the LQG setting~\citep{feizi2020understanding} is that the principal component analysis (PCA) solution is an optimal solution for the generator in the minmax optimization.

In supervised problems, it was widely believed that 
the generalization error behavior as a function of the learned model complexity is completely characterized by the bias-variance tradeoff, i.e., in a supervised setting, the test error goes down and then back up as the learned model is more complex (e.g., has more parameters). 
Relatively recently, it has been shown that test errors can have a \textit{double descent} shape \citep{spigler2018jamming,belkin2019reconciling} as a function of the learned model complexity. Specifically, in the double descent shape the test error goes back down when the learned model is sufficiently complex (i.e., overparameterized) to interpolate the training data (i.e., achieve zero training error). Remarkably, the double descent shape implies that the best generalization performance can be achieved despite perfect fitting of noisy training data. Typically, when models have many more parameters than training data, many mappings can be learned to perfectly fit (i.e., interpolate) the supervised pairs of examples. Therefore, a mapping with small norm is a natural (parsimonious) choice and tends to yield low test error even when the number of parameters is large. The research on overparameterized learning and double descent phenomena has been mostly focused on regression~\citep{belkin2019two,bartlett2020benign,muthukumar2020harmless,dAscoli2020double} and classification~\citep{muthukumar2020classification,deng2019model,kini2020analytic} problems. Some work has been done in overparameterized GANs~\citep{balaji2021understanding} to understand how training stability is affected by increasing the width and depth of networks. By contrast, \textit{we are the first to study generalization performance and double descent behavior in GANs.} 

Since linear GANs are associated with PCA, this study relates to work on overparameterization in PCA~\citep{dar2020subspace} showing that, as one relaxes the orthonormal constraints and adds supervision to PCA, double descent emerges.
Moreover, if the learning is fully supervised and has no orthonormal constraints, then the problem becomes linear regression that estimates a linear subspace. Hence, one can solve learning problems that are partly supervised and partly orthonormally constrained to obtain solutions to problems that are in-between PCA and linear regression. We will leverage this powerful idea to study overparameterization in linear GANs.

\section{Bad generalization: Test errors are constant in the overparameterized regime}
\label{sec:noDDBIG}

\subsection{No double descent in generative models that minimize a metric or \texorpdfstring{$f$}{f}-divergence}
\label{sec:noDD}

\snote{High level way generative models are trained}

The goal of training GANs and generative models in general is to learn the distribution of the data.
This is typically done by minimizing a distance between a fixed (i.e., given) distribution $p_f$, such as the empirical distribution of the training data, and the generated distribution $p_\vthe$ with parameters $\vthe$. 
The training dataset $\mathcal{D}$ includes $n$ examples $\{\vx_i \}_{i=1}^n \in \mathbb{R}^{d}$. 
The next observation characterizes interpolating solutions for these kinds of problems.

\snote{Showing that it is broken}
\begin{observation}
\label{thm:noDD}
Let $P$ be the set of all probability distributions defined on the measurable space $(\Omega, \mcF)$ equipped with any metric or f-Divergence denoted $q$.
We let our training loss be $\mcL^\TRAIN\left( \{\vx_i \}_{i=1}^n, \vthe \right) = q(p_f, p_\vthe)$ for $p_f, p_\vthe \in P$ and the test error is given by $\mcL^\TEST(\vthe) = q(p_t, p_\vthe)$ for the true distribution $p_t \in P$.
Then, for any interpolating solution $\vthe$, i.e., any $\vthe$ so that $\mcL^\TRAIN(\{\vx_i\}_{i=1}^n,\vthe) = 0$, we have that
\[
    \mcL^\TEST(\vthe)
    =
    L^\TEST_\text{interpolate}
\]
where $L^\TEST_\text{interpolate}$ is a non-negative constant that depends on $q$ and $p_f$. Moreover, $p_\vthe$ is unique.

\end{observation}
\begin{proof}
Let $\vthe^\ast$ and $\vthe$ be two interpolating solutions.
Since $q$ is a metric or $f$-divergence, the zero training errors of the interpolating solutions $\vthe^\ast$ and $\vthe$ imply that $p_{\vthe^\ast} = p_f$ and $p_\vthe = p_f$, implying uniqueness since $p_\vthe = p_{\vthe^\ast}$. Additionally,
\[
    \mcL^\TEST(\vthe)
    =
    q(p_t, p_\vthe)
    =
    q(p_t,p_f)
    =
    q(p_t,p_{\vthe^\ast})
    =
    \mcL^\TEST(\vthe^\ast).
\]
By letting $L^\TEST_\text{interpolate}\triangleq q(p_t,p_f) \ge 0$, we get the desired result.
\end{proof}

\begin{corollary}
    There is no double descent in generative models that minimize a metric or $f$-divergence, e.g., PCA for subspace learning, Jensen-Shannon GANs, WGANs, etc.
    \label{thm:noDDcorollary}
\end{corollary}
\snote{Explaining theorem}

In other words, there is no double descent behavior because the test error is \textit{constant} in the overparameterized regime of interpolating solutions.
This differs from the widely studied regression setup in that here we are trying to minimize the distance between two distributions rather than data points drawn from those distributions. In other words, generative modeling treats the data itself as a distribution and not as a set of data points. 
As a consequence, although there may exist more than one interpolating solution $\vthe$, there is only one unique interpolating distribution $p_\vthe$.
Importantly, this result is not specific to GANs but to any generative model that is trained to minimize the distance between the generated distribution and a fixed distribution.

\snote{Narrow our focus on a specific example}

We now narrow our focus to a specific data model to help understand the constant regime of generalization errors.
Recall from \cref{sec:related work} that in the LQG setting, PCA is a solution for the optimal linear generator. Hence, we can study PCA solutions and evaluate them using the $2$-Wasserstein metric (\cref{eq:2-wasserstein distance}) to see the generalization error of the linear generator in the LQG setting. 
We assume that our training data $\{\vx_i \}_{i=1}^n$ are realizations of a random vector $\vx \in \bbR^d$ that satisfies the noisy linear model $\vx = \mGam \vz + \veps$. Here $\mGam \in \bbR^{d \times m}$ is a deterministic rank $m$ matrix (for $m < d$),  $\vz \in \bbR^{m}$ is a latent random vector of a zero-mean isotropic Gaussian distribution, and $\veps \sim \mcN(\bzero, \sigma^2 \mI_d)$ is a noise vector. 
The true latent dimension $m$ is unknown to the learner; hence, we will pick $k > 0$ and learn a generator matrix $\mG \in \bbR^{d \times k}$. The true, noiseless distribution is Gaussian:  $\vx_\TRUE = \mGam \vz \sim \mcN(\bzero, \mGam \mGam^\transp)$. Thus, if the learned latent dimension $k$ equals the true latent dimension $m$, then  $\mG = \mGam$ is an optimal solution.
We assume that $m<d$, hence the covariance matrix of $\vx \sim \mcN(\bzero, \mGam\mGam^\transp + \sigma^2\mI_d)$ is the sum of a low rank covariance matrix $\mGam \mGam^\transp$ and a full rank noise covariance matrix, our choice of $k$ will affect how much we overfit to the noise distribution.

\snote{Justifying the $m < n < d$ case}

We consider $m < n < d$, i.e., the number of training examples $n$ is higher than the true latent space dimension $m$, and lower than the data dimension $d$, for several reasons.
Most importantly, data is often assumed to lie on a low dimensional manifold in a higher dimensional space.
Thus if $m \geq d$, then $\mGam\mGam^\transp$ will have rank $d$ and the noiseless data  $\vx_\TRUE$ will have a non-zero probability of being in any open set in $\bbR^d$, which is clearly not true for many types of data, such as natural images.
We also choose to study $m < d$  because it will allow our model to overfit (when the learned latent dimension $k>m$). 
Now we turn to our choice of $n$ and note that if $n \geq d$, we get the typical $\rm U$-shaped curve of the bias-variance tradeoff for generalization error as a function of the learned latent dimension $k$. 
If $n \leq m$, then the generalization error is just monotonically decreasing in $k$ and is of little interest because it is unlikely that we overfit to our data. For these  reasons, we consider only  $m < n < d$, which permits study of the double descent phenomenon.

\begin{figure}[t]
    \centering
    
    \newcommand\figWidth{0.33}
    \pgfplotsset{
        customAxis/.style={
            ylabel={Error},
            xlabel={\LATENTSPACE},
            axis x line*=bottom,
            axis y line*=left,
            xmin=0,
            width=\textwidth,
          },
          cycle list name=colorList,
        }
    \begin{subfigure}[t]{\figWidth\textwidth}  
        \begin{tikzpicture}
            \begin{axis}[customAxis, title={Interpolation at $k \ge n$},ymin=-0.00001,ylabel={Train error}]
                \addplot
                table [x=x, y=y, col sep=comma] {csv/spoon_demo_train.csv};
                
                \addplot[verticalDashed] coordinates {(40,0)(40,14.350643115250584)} node[pos=1.0,right] (nLocation) {$n$};

            \end{axis}
        \end{tikzpicture}
    \end{subfigure}
    \begin{subfigure}[t]{\figWidth\textwidth}  
        \begin{tikzpicture}
            \begin{axis}[customAxis, title={Constant for $k\ge n$}, yshift=-20pt ,ylabel={Test error}, ymin=4.5]
                \addplot
                table [x=x, y=y, col sep=comma] {csv/spoon_demo_test.csv};
                
                \addplot[verticalDashed] coordinates {(40,4.5)(40,9.342205046489925)} node[pos=1.0,right] (nLocation) {$n$};
                \addplot[verticalDashed] coordinates {(20,4.5)(20,9.342205046489925)} node[pos=1.0,right] (nLocation) {$m$};
            \end{axis}
        \end{tikzpicture}
    \end{subfigure}
    
    \caption{PCA and linear GAN's test error becomes constant when the model interpolates, i.e., when the latent dimensionality $k$ equals the number of training samples $n$. Therefore, the overparameterized regime does not exhibit double descent but rather a constant error. The test error achieves its minimum when the latent dimensionality $k$ is near the true model's dimensionality $m$. The train errors (left subfigure) and test errors (right subfigure) are calculated with the $2$-Wasserstein metric.}
    \label{fig:spoon_demo}
\end{figure}

\snote{PCA result}

We train a linear GAN by picking the top $k$ principal components ($k\le d$), namely, minimizing the training loss  
\begin{equation}
\label{eq:training loss - PCA}
    \mcL^\TRAIN(\mG, \mX)
    =
    \|(\mI_d - \mG \mG^\transp)\mX\|_F^2
\end{equation}
under the constraint that the $d\times k$ matrix $\mG$ has orthonormal columns. Moreover, $\mX\in\bbR^{d\times n}$  denotes the data matrix with $n$ training examples as its columns.
If $k > n$, we run out of nonzero eigenvalues and cannot add any more; the learned generator interpolates by producing zero training error. However, the test error will increase if we learn noise, i.e., if the eigenvalues and eigenvectors of $\mGam \mGam^\transp + \sigma^2\mI_d$ are corrupted by the noise covariance $\sigma^2 \mI_d$. \cref{fig:spoon_demo} shows the train and test errors for the learned model as a function of the learned latent dimension $k$.
We obtain generalization behavior in two stages. First, there is a $\rm U$-shape with a minimum around $k=m$; then, as the solutions start to interpolate in the overparameterized regime of $k>n$, we observe a constant test error.

\snote{How the PCA result relates back to Thm 1}

To relate this back to \cref{thm:noDD} and \cref{thm:noDDcorollary}, here the training data distribution $p_f$ is $\mcN(\widehat \vmu_\vx,\widehat \mSig_\vx)$, where $\widehat \vmu_\vx \in \bbR^d$ and $\widehat \mSig_\vx \in \bbR^{d\times d}$ are the empirical mean vector and covariance matrix of the training data, respectively. 
Roughly speaking, we can think of the generator as learning the true distribution with some noise for the first $m$ components and then just learning noise in the subspace orthogonal to the data; technically, we learn wrong directions in the data for small $k$ if the noise variance $\sigma^2$ is very large.
In this setting, where the number of training samples $n$ allows us to interpolate, the best that one can do is to try to guess $m$ by using prior knowledge or training multiple models using cross-validation. These solutions are not satisfactory in many scenarios, so we delve deeper into understanding why the test error in the overparameterized regime is constant. Specifically, we study the overparameterized regime to see if it can be modified in a beneficial way.

\subsection{Double descent: Getting double descent through actual supervision}
\label{sec:supervision}

\snote{Studying supervised PCA to get DD}

Since PCA gives us a solution to GANs trained in the LQG setting, we turn to studying overparameterization in PCA to understand overparameterization in GANs. 
It was shown that PCA (with soft orthonormality constraints) does exhibit double descent if supervision was added to the training~\citep{dar2020subspace}. 
This is consistent with our understanding, since supervision will cause the learned map to have fewer degrees of freedom and fit the data exactly. 
The work in~\citep{dar2020subspace} focuses on learning a linear subspace for the purpose of dimensionality reduction, therefore we extend that idea to work for the purpose of data generation.

\snote{Model definition}
For a start, consider an ideal setting where $n_\SUP$ out of the $n$ training examples are given with their true latent vectors. Namely, the training dataset $\mathcal{D}$ includes $n_\SUP\in\{0,\dots,n\}$ supervised examples $\{(\vx_i , \vz_i ) \}_{i=1}^{n_\SUP}$ and $n_\UNSUP = n - n_\SUP$ unsupervised examples $\{\vx_i \}_{i=n_\SUP + 1}^{n}$. The training data vectors are organized as the columns of the matrices $\mX^\SUP \in \bbR^{d \times n_\SUP}$, $\mZ^\SUP \in \bbR^{ m \times n_\SUP}$, and $\mX^\UNSUP \in \bbR^{d \times n_\UNSUP}$, respectively.  

\snote{Defining the supervised vectors}

Since in the ideal setting of this subsection we have true samples of the latent vectors $\vz$ that correspond to data points $\vx$, this means that we know the true latent space dimension $m$. Hence, we can control the parameterization of the learned model by choosing a latent dimension $k\le m$ via subsampling of coordinates in $\vz$ (it turns out that subsampling allows us to optimize over a pseudometric; see \cref{app:pseudometric}). Namely, for a set of $k\le m$ unique coordinate indices $\mcS \subset \{1,\dots, m\}$, we define $\{\vz_{i,\mathcal{S}}\}_{i=1}^{n_\SUP}$ as the corresponding subvectors of the training data $\{\vz_i\}_{i=1}^{n_\SUP}$. The matrix $\mZ^\SUP_{\mathcal{S}} \in \bbR^{ k \times n_\SUP}$ has the subsampled vectors $\{\vz_{i,\mathcal{S}}\}_{i=1}^{n_\SUP}$ as its columns. 

\snote{Model training}

To train our model, we use a PCA loss term from (\ref{eq:training loss - PCA}) on the unsupervised portion of the data $\mX^\UNSUP$  mixed with a supervised loss term on the supervised portion of the data $\mZ^\SUP, \mX^\SUP$:
\begin{equation}
\label{eq:training loss - supervised}
    \mcL^\TRAIN(\mG, \mcD)
    =
    \frac{1}{n_\SUP}\|\mG \mZ^\SUP_\mcS - \mX^\SUP \|_F^2 
    + 
    \frac{1}{n_\UNSUP}\|(\mI_d - \mG \mG^\transp)\mX^\UNSUP\|_F^2
\end{equation}
for a generator matrix $\mG \in \bbR^{d \times k}$ (that is not explicitly constrained to have orthonormal columns).
Unlike the PCA optimization in (\ref{eq:training loss - PCA}), the optimizations in the current and following subsections do not include any explicit orthonormal constraints on the columns of the 
learned matrix $\mG$. Here, the supervised portion of the data gives us specific information about $\mGam$, which we can use to train a better model. This model is trained by minimizing the loss in \cref{eq:training loss - supervised} with gradient descent.

\snote{We see DD but can improve}

\cref{fig:DDsup} shows that \textit{our model does indeed exhibit double descent}. However, there are a few limitations with the setup which we will now enumerate.
\begin{enumerate}[label=L\arabic*]
    \item \textbf{This setup requires supervised pairs:} We may not have access to the true latent vectors in practice.
    \label{enum:supervised pairs}
    \item \textbf{This setup requires us to throw away data:} We can only vary $k$ from $1$ to $m$ because we are subsampling coordinates from the true latent vectors in $\bbR^m$. In practice, we would not want to subsample as this throws away potentially useful data information.
    \label{enum:throw away data}
    \item \textbf{This setup is not typical:} Because $k < m$, we must have $n < m$ to get a peak in the test error at $k = n$ which is not the interesting/typical setting where $m < n < d$ (see \cref{sec:noDD}). Thus, we are not overfitting, which happens when we learn eigenvectors in the noise directions orthogonal to the data directions, for which $k > m$.
    \label{enum:not typical}
    \item \textbf{This setup is supervised, unlike most generative models:} We would like to study generative models, which are typically unsupervised. By adding supervision, we are not actually studying an unsupervised setting but rather a setting which is semi-supervised.
    \label{enum:not unsupervised} 
    \item \textbf{This setup is not beneficial:} The double descent here actually does not improve performance.
    \label{enum:not beneficial}
\end{enumerate}
Recall that we want to investigate the overparameterization of generative models to find settings that are realistic and beneficial. We resolve all these problems with pseudo-supervision, defined in the next section.

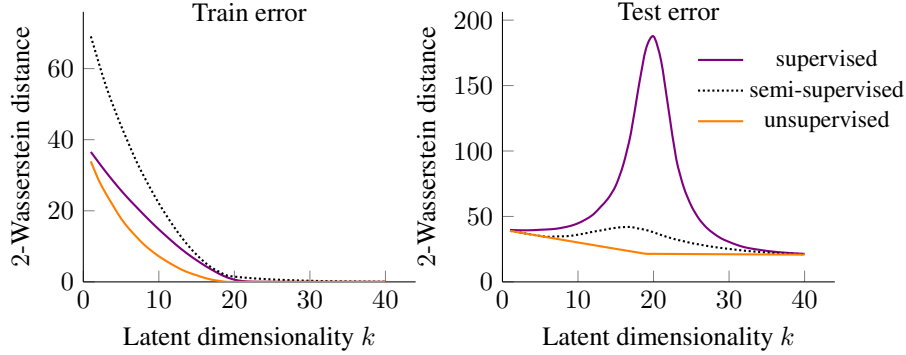
\begin{figure}[t]
    \centering
    
    \newcommand\figWidth{0.38}
    \pgfplotsset{
        customAxis/.style={
            ylabel={$2$-Wasserstein distance},
            xlabel={\LATENTSPACE},
            axis x line*=bottom,
            axis y line*=left,
            xmin=0,
            ymin=0,
            width=6.0cm,
          },
          cycle list name=colorList2,
        }

    \begin{subfigure}[t]{\figWidth\textwidth}  
        \begin{tikzpicture}
            \begin{axis}[customAxis, title={Train error}]
                \addplot
                table [x=m, y=n20, col sep=comma] {csv/supervised_train.csv};
                \addplot table [x=m, y=n12, col sep=comma] {csv/supervised_train.csv};
                \addplot
                table [x=m, y=n0, col sep=comma] {csv/supervised_train.csv};
            \end{axis}
        \end{tikzpicture}
    \end{subfigure}
    \begin{subfigure}[t]{\figWidth\textwidth}  
        \begin{tikzpicture}
            \begin{axis}[customAxis, title={Test error}, legend style={at={(1.25,0.9)}, font=\small, draw=none}, legend entries={supervised, semi-supervised, unsupervised}]
                \addplot
                table [x=m, y=n20, col sep=comma] {csv/supervised_test.csv};
                \addplot table [x=m, y=n12, col sep=comma] {csv/supervised_test.csv};
                \addplot
                table [x=m, y=n0, col sep=comma] {csv/supervised_test.csv};
            \end{axis}
        \end{tikzpicture}
    \end{subfigure}
    
    \caption{The fully supervised model achieves a peak when the latent dimensionality $k$ is equal to the number of training samples $n$. The unsupervised model stops changing as soon as it interpolates at $k = n$. The semi-supervised model with $n_\SUP = 12$ behaves in a way that is somewhat in-between the other two. For other values of $n_\SUP$ and implementation details, see \cref{app:exp on linear stuff}.}
    \label{fig:DDsup}
\end{figure}

\section{Pseudo-supervision: A practical alternative to adding supervision}
\label{sec:psBIG}

\subsection{Definition of pseudo-supervision}
\label{sec:ps}

\snote{PS and how it solves the limitations}

Input-output pairs of points are not realistically available in GAN training, which is unsupervised. Therefore, we will \textit{make up latent vectors} that correspond to true data points in our training set. We call these vectors pseudo-supervised latent vectors. Although it may seem odd to partially fabricate training data, there are many advantages to it, starting with not needing access to supervised data (solving \ref{enum:supervised pairs}).
Because we have full control over the generation of latent vectors, we can make them of any dimension (solving \ref{enum:throw away data}).
Consequently, we do not need to know the true latent dimensionality $m$ and can study when $k > m$ (solving \ref{enum:not typical}). 
Since the pseudo-supervised latent vectors are artificial, this is still an unsupervised problem because we have no supervised data (solving \ref{enum:not unsupervised}). The obvious question is of course ``is this beneficial?'' The rest of the paper will show that \textbf{there are generalization and convergence benefits to pseudo-supervision} (solving \ref{enum:not beneficial}).

\snote{An example explaining PS}

To understand why pseudo-supervision works, consider the supervised scenario discussed in \cref{sec:supervision} except with only one supervised sample: $(\vz_1, \vx_1)$.
Now suppose that $\vz_\PS \in \bbR^m$ is a completely fabricated sample, independent of $\vx_1$, drawn from the same distribution as $\vz_1$.
We know that if $\mG^\UNSUP$ is a solution to the unsupervised optimization, then so is $\mG^\UNSUP \mU$ where $\mU \vz_\PS = \vz_1$ and $\mU$ is an orthonormal matrix (see Theorem 7.3.11 in~\citep{horn2012matrix}). This is because $(\mG^\UNSUP \mU)(\mG^\UNSUP \mU)^\transp = \mG^\UNSUP (\mG^\UNSUP)^\transp$ since positive definite matrices are unique up to a orthonormal transformation. Such a matrix exists if $\|\vz_\PS\|_2 = \| \vz_1\|_2$ because $\mU$ is a norm-preserving operator. In other words, it doesn't matter if we use $\vz_1$ or $\vz_\PS$ as long as $\|\vz_\PS\|_2 = \| \vz_1\|_2$. 
Miraculously, by the curse of dimensionality, $\|\vz_\PS\|_2 = \|\vz_1\|_2$ with high probability if $k$ is large enough! This line of reasoning can be extended past one pseudo-supervised example $n_\PS=1$ to $n_\PS = k$ (see \cref{app:ps unitary math}), after which we incur a penalty for learning a bad representation (because we cannot find an orthonormal matrix which will satisfy the conditions above).
Therefore, because of positive definite matrix symmetries and the curse of dimensionality,\textit{ we can use pseudo-supervision in a very similar way to supervision without actually knowing any additional information.}

\snote{Notation and semi-PS setups}

In the following subsections we will define several pseudo-supervised settings, in all of which  $n_\PS$ out of the $n$ given training examples $\{\vx_i  \}_{i=1}^{n}$ are associated with pseudo (i.e., artificial) latent vectors of dimension $k>0$ (because the true latent dimension is unknown in general). Specifically, the training dataset $\mathcal{D}$ includes $n_\PS\in\{0,\dots,n\}$ pseudo-supervised examples $\{(\vx_i , \vz_i ) \}_{i=1}^{n_\PS}$, where $\{ \vz_i  \}_{i=1}^{n_\PS}$ are i.i.d.~samples of $\mathcal{N}(\bzero ,\mI_{k})$, and $n_\UNSUP = n - n_\PS$ unsupervised examples $\{\vx_i \}_{i=n_\PS + 1}^{n}$. The training data vectors are organized as the columns of $\mX^\PS \in \bbR^{d \times n_\PS}$, $\mZ^\PS \in \bbR^{ k \times n_\PS}$, and $\mX^\UNSUP \in \bbR^{d \times n_\UNSUP}$.

\subsection{Double descent and superior performance with pseudo-supervision}
\label{sec:grad1}
Our first pseudo-supervised experiment is a straightforward modification of the experiment in \cref{sec:supervision}. We modify \cref{eq:training loss - supervised} to get the new pseudo-supervised loss
\begin{equation}
\label{eq:training loss - gradient type 1}
    \mcL^\TRAIN(\mG, \mcD)
    =
    \frac{1}{n_\PS}\|\mG \mZ^\PS_\mcS - \mX^\PS \|_F^2 
    + 
    \frac{1}{n_\UNSUP} \|(\mI_d - \mG \mG^\transp)\mX^\UNSUP\|_F^2,
\end{equation}
where $\mX^\PS\in\bbR^{d \times n_\PS}$ and $\mZ^\PS \in \bbR^{k \times n_\PS}$ are the pseudo-supervised matrices. 
For $n_\UNSUP = 0$ or $n_\PS = 0$, we only use the first or second term in the loss, respectively.
We provide a detailed explanation of the gradient calculations and optimization procedure in \cref{app:exp on linear stuff}.
Note that since the pseudo-supervised latent vectors are completely fabricated, we do not have to subsample their coordinates (i.e., as in the supervised setting of \cref{sec:supervision}) and we can choose $k$ to be any natural number. As shown in the first column of \cref{fig:psResults}, we achieve beneficial double descent behavior of test error. To the best of our knowledge, \textbf{this is the first time that double descent has been used beneficially in an unsupervised setting.}

\if\recompileSpecialFig1
\tikzexternaldisable
\fi
\begin{figure*}[t]
    \centering
    
    \newcommand\figWidth{0.32}
    \newcommand\cmWidth{5.5cm}
    
    \pgfplotsset{
        customAxis/.style={
            ylabel={},
            xlabel={\LATENTSPACE},
            axis x line*=bottom,
            axis y line*=left,
            ymin=0,
            ymax =20,
            xmin=0,
            width=\cmWidth,
          },
          cycle list name=colorList3,
        }

    \hspace{-.5cm}
    \begin{subfigure}[t]{\figWidth\textwidth}  
        \begin{tikzpicture}
            \begin{axis}[customAxis, title={Trained using \cref{eq:training loss - gradient type 1}}, ylabel={Test error}]
                \addplot
                table [x=m, y=n20, col sep=comma] {csv/grad1_test.csv};
                \addplot table [x=m, y=n12, col sep=comma] {csv/grad1_test.csv};
                \addplot
                table [x=m, y=n0, col sep=comma] {csv/grad1_test.csv};
            \end{axis}
        \end{tikzpicture}
    \end{subfigure}
    \hspace{.3cm}
    \begin{subfigure}[t]{\figWidth\textwidth}  
        \begin{tikzpicture}
            \begin{axis}[customAxis, title={Trained using \cref{eq:training loss - gradient type 2}}]
                \addplot
                table [x=m, y=n20, col sep=comma] {csv/grad2_test.csv};
                \addplot table [x=m, y=n12, col sep=comma] {csv/grad2_test.csv};
                \addplot
                table [x=m, y=n0, col sep=comma] {csv/grad2_test.csv};
            \end{axis}
        \end{tikzpicture}
    \end{subfigure}
    \begin{subfigure}[t]{\figWidth\textwidth}  
        \begin{tikzpicture}
            \begin{axis}[customAxis, title={Trained using \cref{eq:training loss - gradient type 3}}, legend style={at={(1.05,1.0)}, font=\small, draw=none}, 
            legend entries={$n_\PS = 20$, $n_\PS = 12$, $n_\PS = 0$}]
                \addplot
                table [x=m, y=n20, col sep=comma] {csv/grad3_test.csv};
                \addplot table [x=m, y=n12, col sep=comma] {csv/grad3_test.csv};
                \addplot
                table [x=m, y=n0, col sep=comma] {csv/grad3_test.csv};
            \end{axis}
        \end{tikzpicture}
    \end{subfigure}

    \def\showTraining{1}

    \if\showTraining1

    \pgfplotsset{
        customAxisTrain/.style={
            ylabel={},
            xlabel={\LATENTSPACE},
            axis x line*=bottom,
            axis y line*=left,
            ymin=0,
            ymax =2,
            xmin=0,
            width=\cmWidth,
          },
          cycle list name=colorList3,
        }
    
    \hspace{-.5cm}
    \begin{subfigure}[t]{\figWidth\textwidth}  
        \begin{tikzpicture}
            \begin{axis}[customAxisTrain, title={Trained using \cref{eq:training loss - gradient type 1}}, ylabel={Train error}]
                \addplot
                table [x=m, y=n20, col sep=comma] {csv/grad1_train.csv};
                \addplot table [x=m, y=n12, col sep=comma] {csv/grad1_train.csv};
                \addplot
                table [x=m, y=n0, col sep=comma] {csv/grad1_train.csv};
            \end{axis}
        \end{tikzpicture}
    \end{subfigure}
    \hspace{.3cm}
    \begin{subfigure}[t]{\figWidth\textwidth}  
        \begin{tikzpicture}
            \begin{axis}[customAxisTrain, title={Trained using \cref{eq:training loss - gradient type 2}}]
                \addplot
                table [x=m, y=n20, col sep=comma] {csv/grad2_train.csv};
                \addplot table [x=m, y=n12, col sep=comma] {csv/grad2_train.csv};
                \addplot
                table [x=m, y=n0, col sep=comma] {csv/grad2_train.csv};
            \end{axis}
        \end{tikzpicture}
    \end{subfigure}
    \begin{subfigure}[t]{\figWidth\textwidth}  
        \begin{tikzpicture}
            \begin{axis}[customAxisTrain, title={Trained using \cref{eq:training loss - gradient type 3}}, legend style={at={(1.05,1.0)}, font=\small, draw=none}, 
            legend entries={$n_\PS = 20$, $n_\PS = 12$, $n_\PS = 0$}]
                \addplot
                table [x=m, y=n20, col sep=comma] {csv/grad3_train.csv};
                \addplot table [x=m, y=n12, col sep=comma] {csv/grad3_train.csv};
                \addplot
                table [x=m, y=n0, col sep=comma] {csv/grad3_train.csv};
            \end{axis}
        \end{tikzpicture}
    \end{subfigure}
    
    \fi

    \pgfplotsset{
        customAxis/.style={
            ylabel={},
            xlabel={\LATENTSPACE},
            axis x line*=bottom,
            axis y line*=left,
            ymin=0,
            ymax =520,
            xmin=0,
            width=\cmWidth,
          },
          cycle list name=colorList3,
        }
    
    \hspace{-.5cm}
    \begin{subfigure}[t]{\figWidth\textwidth}  
        \begin{tikzpicture}
            \begin{axis}[customAxis, title={Trained using \cref{eq:training loss - gradient type 1}},
            ylabel={\# Iterations to convergence}]
                \addplot
                table [x=m, y=n20, col sep=comma] {csv/grad1_iterations.csv};
                \addplot table [x=m, y=n12, col sep=comma] {csv/grad1_iterations.csv};
                \addplot
                table [x=m, y=n0, col sep=comma] {csv/grad1_iterations.csv};
            \end{axis}
        \end{tikzpicture}
    \end{subfigure}
    \hspace{.3cm}
    \begin{subfigure}[t]{\figWidth\textwidth}  
        \begin{tikzpicture}
            \begin{axis}[customAxis, title={Trained using \cref{eq:training loss - gradient type 2}}]
                \addplot
                table [x=m, y=n20, col sep=comma] {csv/grad2_iterations.csv};
                \addplot table [x=m, y=n12, col sep=comma] {csv/grad2_iterations.csv};
                \addplot
                table [x=m, y=n0, col sep=comma] {csv/grad2_iterations.csv};
            \end{axis}
        \end{tikzpicture}
    \end{subfigure}
    \begin{subfigure}[t]{\figWidth\textwidth}  
        \begin{tikzpicture}
            \begin{axis}[customAxis, title={Trained using \cref{eq:training loss - gradient type 3}}, legend style={at={(1.05,1.0)}, font=\small, draw=none}, 
            legend entries={$n_\PS = 20$, $n_\PS = 12$, $n_\PS = 0$}]
                \addplot
                table [x=m, y=n20, col sep=comma] {csv/grad3_iterations.csv};
                \addplot table [x=m, y=n12, col sep=comma] {csv/grad3_iterations.csv};
                \addplot
                table [x=m, y=n0, col sep=comma] {csv/grad3_iterations.csv};
            \end{axis}
        \end{tikzpicture}
    \end{subfigure}
    \caption{ Evaluation of test error and training convergence speed in learning of linear GANs using the three different training loss formulations in (\ref{eq:training loss - gradient type 1}),(\ref{eq:training loss - gradient type 2}),(\ref{eq:training loss - gradient type 3}).  In the first column of subfigures, we use (\ref{eq:training loss - gradient type 1}) and get double descent that beats the unsupervised baseline in both generalization performance and convergence speed in the overparameterized range of solutions (the baseline corresponds to the case of no pseudo-supervised training samples $n_\PS = 0$). In the second column of subfigures, we use (\ref{eq:training loss - gradient type 2}) and squash the double descent to get lower generalization error for small latent dimensionality $k$. In the third column of subfigures, we get triple descent (one peak at $k=n$ and one peak at $k=d$) as well as low generalization errors and extremely fast training speed for large $k$. In these experiments, the true data is $m = 10$ dimensional, the data space is $d = 64$ dimensional, and we have $n=20$ total training data samples. 
    The null estimator ($\mG = \mzero d k$) achieves a test error of approximately $13$, so all of these models perform better for large enough $k$.
    For additional plots, see \cref{app:exp on linear stuff}. 
    }
    \label{fig:psResults}
\end{figure*}
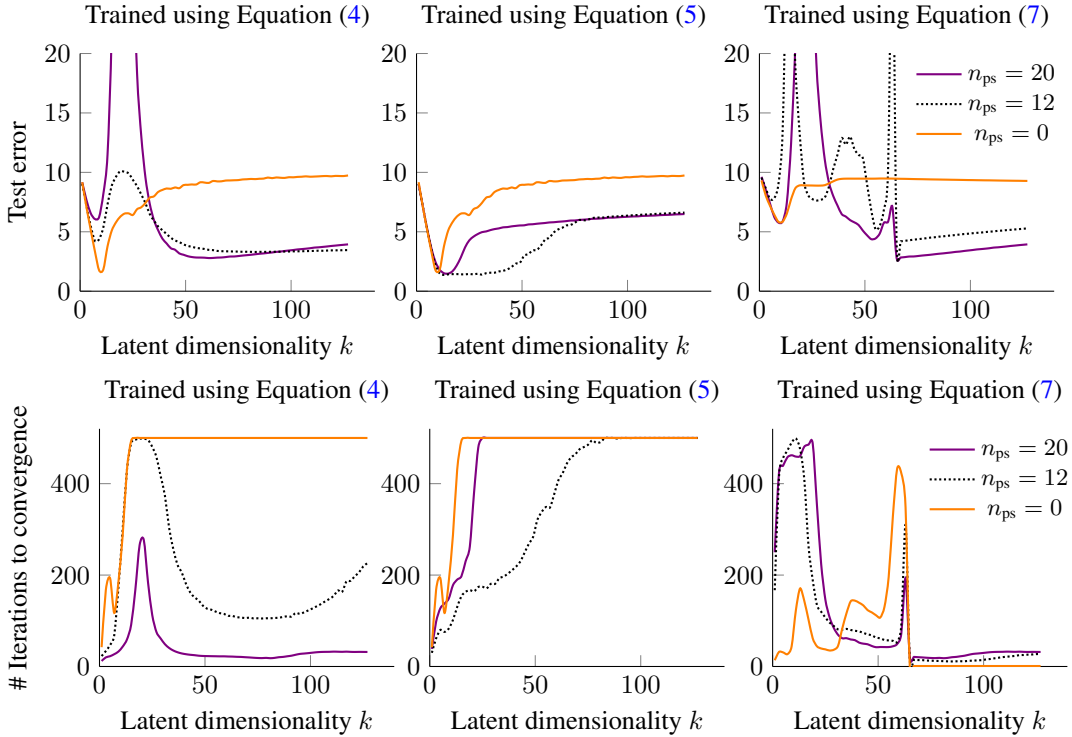
\if\recompileSpecialFig1
\tikzexternalenable
\fi

We have extremely low generalization error when $n_\PS = n$, even though the loss function does not try to optimize any PCA-type loss. When $n_\PS = n$, we have $n_{\UNSUP} = 0$ and the loss $\mcL^\TRAIN(\mG, \mcD) = \|\mG \mZ^\PS_\mcS - \mX^\PS \|_F^2$ is completely pseudo-supervised.
One would expect this scenario to perform poorly since the pseudo-supervised examples do not provide any information, and indeed it does -- for small $k$. However, when $k$ is large, we perform well, even though the loss does not attempt to minimize the original PCA loss. Thus, instead of guessing the true latent dimension $m$ that is required for good performance in the standard setting of \cref{sec:noDD}, \textbf{we can simply add pseudo-supervision and increase overparameterization to achieve low generalization error!}

As can be seen from the first column of \cref{fig:psResults}, we achieve better generalization performance via the double descent phenomenon, and we also accelerate training convergence. 
Interestingly, convergence time actually exhibits double descent as well.
The accelerated convergence may be partly due to the unsupervised loss dropping off when $n_\PS = n$; however, we will address this in the next section by having a more regularized loss function.

\subsection{Regularized pseudo-supervision}
\label{sec:grad2}
In the previous section, as $n_\PS$ increases, the unsupervised term in the loss drops off. This term, in some sense, regularizes the optimization by encouraging the solution to be orthonormal.
This is because, if $\mG$ has orthonormal columns, then $(\mI_d - \mG \mG^\transp) \vx = 0$ for all $\vx$ in the columnspace of $\mG$. 
We will then use the full data matrix $\mX$ (which is a horizontal concatenation of $\mX^\PS$ and $\mX^\UNSUP$) in the second term of the loss function:
\begin{equation}
\label{eq:training loss - gradient type 2}
    \mcL^\TRAIN(\mG, \mcD)
    =
    \frac{1}{n_\PS}\|\mG \mZ^\PS_\mcS - \mX^\PS \|_F^2 
    + 
    \frac{1}{n} \|(\mI_d - \mG \mG^\transp)\mX\|_F^2.
\end{equation}
The results for this optimization are shown in the second column of \cref{fig:psResults}.

This regularized setting with pseudo-supervision outperforms the completely unsupervised setting, but we do not interpolate (i.e., we do not achieve zero train loss), and, consequently, do not see a double descent. This is typical for more regularized problems, as regularization tends to attenuate the double descent phenomenon (see, e.g., for orthonormality constraints in~\citep{dar2020subspace}, or for ridge regularization in~\citep{hastie2019surprises,nakkiran2020optimal}). However, this suggests that the relative importance between the first and second term may significantly impact double descent behavior. More specifically, the only difference between this optimization and the one discussed in \cref{sec:grad1} is that the second term uses all the data even when $n_\PS > 0$. 
Thus, we can think of the second term as a regularizer for the loss. On the other hand, we can view the first term as constraining the optimization to fit our pseudo-supervised pairs of points, and thus also a regularizer. Therefore, depending on the point of view, each term can regularize the loss.

Since either of the terms in the training loss in (\ref{eq:training loss - gradient type 2}) can be perceived as a regularizer, we augment (\ref{eq:training loss - gradient type 2}) with disproportionate weighting in order to see if this affects the generalization behavior (e.g., the existence of double descent phenomena):
\begin{equation}
\label{eq:training loss - gradient type 4}
    \mcL^\TRAIN(\mG, \mcD)
    =
    \frac{\alpha}{n_\PS}\|\mG \mZ^\PS_\mcS - \mX^\PS \|_F^2 
    + 
    \frac{1-\alpha}{n}\|(\mI_d - \mG \mG^\transp)\mX\|_F^2,
\end{equation}
for $\alpha \in [0,1]$.
A figure of the results is shown in \cref{app:exp on linear stuff}.
Surprisingly, weighting the loss function in this manner actually does achieve double descent, which leads to lower test error.   We discuss this model here in order to highlight that the relative importance between the pseudo-supervised and unsupervised loss terms can induce double descent behavior.

Since we do not exhibit double descent using \cref{eq:training loss - gradient type 2}, it is interesting to characterize how the pseudo-supervision term affects the solution set of the problem. The next two theorems characterize the usual unsupervised solutions and the pseudo-supervised solutions. Their proofs are included in \cref{app:solution sets}.

\begin{restatable}{theorem}{THMsolUnsupT}
Suppose that $\mX \in \bbR^{d\times n}$ has full rank of $\min\{d,n\}$. 
For the unsupervised loss $L_\UNSUP^\transp(\mG, \mX) \defeq \| (\mI_d - \mG \mG^\transp) \mX \|_F^2$, let $\solUnsupT(k) \defeq \{ \mG \in \bbR^{d\times k}: L_\UNSUP^\transp(\mG, \mX) = 0 \}$ be the set of interpolating solutions. Then,
\begin{enumerate}
    \item $\solUnsupT(k) = \emptyset$ if $n > k$.
    \item $\solUnsupT(k)$ is a smooth manifold of dimension $\frac{n(n-1)}{2}$ when $n = k$.
    \item $\solUnsupT(k)$ is the union of $\binom{n}{k}$  smooth manifolds of dimension $\frac{n(n-1)}{2} (k-n)(d-n)$ when $k > n$.
\end{enumerate}

\label{thm:solUnsupT}
\end{restatable}

\begin{restatable}{theorem}{THMsolPinv}
Suppose that $\mX \in \bbR^{d\times n}$ has full rank of $\min\{d,n\}$ and let $\lambda > 0$ be given. 
For the pseudo-supervised loss $ L^\transp_\PS(\mG, \mX; \lambda) \defeq \frac{\lambda }{n_{\PS}} \| \mG \mZ_\mcS^\PS - \mX^\PS \|_F^2 + \frac{1}{n} \| (\mI_d - \mG \mG^\transp) \mX \|_F^2$, let $\solT(k) \defeq \{ \mG \in \bbR^{d\times k}: L_\PS^\transp(\mG, \mX, \lambda) = 0 \}$ be the set of interpolating solutions. 
Then,
\begin{enumerate}
    \item $\solT(k) = \emptyset$ if $n > k$ and $\mZ \in \bbR^{k\times n}$ is arbitrary.
    \item $\solT(k)$ has only one element if $n = k$ and $\mZ \in \bbR^{k\times n}$ is given so that $\mZ^\transp \mZ = \mX^\transp \mX$.
    \item 
    $\solT(k)$ is the union of $\binom{n}{k}$ smooth manifolds of dimension $(k-n)(d-n)$ if $k > n$ and $\mZ = \begin{bmatrix} \mZ_1 \\ \bzero \end{bmatrix} \in \bbR^{k\times n}$ is given so that $\mZ_1^\transp \mZ_1 = \mX^\transp \mX$.
\end{enumerate}

\label{thm:solPinv}
\end{restatable}

Note that although there exists many unsupervised solutions (\cref{thm:solUnsupT}), the pseudo-supervised solutions (\cref{thm:solPinv}) depend heavily on the condition that $\mZ^\transp \mZ = \mX^\transp \mX$. Indeed this condition does not happen in practice with a Gaussian $\mZ$, resulting in no interpolation and a regularizing effect. This restrictive condition comes from the transpose in the unsupervised term. In the next section we will relax this into a pseudo-inverse in order to interpolate better.

\subsection{Triple descent and huge latent spaces}
\label{sec:grad3}
The similar losses of \cref{eq:training loss - gradient type 1,eq:training loss - gradient type 2,eq:training loss - gradient type 4} indirectly encourage learning semi-orthogonal generator matrices.
We can relax this constraint and let our generator learn more complex linear functions by optimizing
\begin{equation}
\label{eq:training loss - gradient type 3}
    \mcL^\TRAIN(\mG, \mcD)
    =
    \frac{1}{n_\PS}\|\mG \mZ^\PS_\mcS - \mX^\PS \|_F^2 
    + 
    \frac{1}{n}\|(\mI_d - \mG \mG^\pinv)\mX\|_F^2,
\end{equation}
where $\mG^\pinv$ is the Moore-Penrose pseudo-inverse of the matrix $\mG$.
Training this loss may seem similar to the others, but the results are quite different.

With this new loss, we achieve triple descent and desirable generalization and convergence behavior when the latent dimensionality $k$ is larger than the data space dimensionality $d$ (third column of \cref{fig:psResults}). This scenario is most closely related to neural networks because the models that we learn are very general and typically not constrained (e.g., to have orthonormal layers). Moreover, the pseudo-supervised optimization converges to a solution which beats the unsupervised baseline with few iterations.

Just as we did in \cref{sec:grad3}, we will characterize the solution sets for unsupervised pseudoinverse loss and the pseudo-supervised pseudoinverse loss. The next two theorems do this and show that we no longer have the restrictive $\mZ^\transp \mZ = \mX^\transp \mX$ condition. Their proofs are located in \cref{app:solution sets}.

\begin{restatable}{theorem}{THMsolUnsupPinv}
Suppose that $\mX \in \bbR^{d\times n}$ has full rank of $\min\{d,n\}$.
For the unsupervised loss $L_\UNSUP^\pinv(\mG, \mX) \defeq \| (\mI_d - \mG \mG^\pinv) \mX \|_F^2$,
let $\solUnsupPinv(k) \defeq \{ \mG \in \bbR^{d\times k}: L_\UNSUP^\pinv(\mG, \mX) = 0 \}$ be the set of interpolating solutions. Then,
\begin{enumerate}
    \item $\solUnsupPinv(k) = \emptyset$ if $n > k$.
    \item $\solUnsupPinv(k)$ is a smooth manifold of dimension $n^2$ when $n = k$.
    \item 
    $\solUnsupPinv(k)$ is the union of $\binom{n}{k}$ smooth manifolds of dimension $n^2 (k-n)d$ when $k > n$.
\end{enumerate}

\label{thm:solUnsupPinv}
\end{restatable}

\begin{restatable}{theorem}{THMsolT}
Suppose that $\mX \in \bbR^{d\times n}$ has full rank of $\min\{d,n\}$ and let $\lambda > 0$ be given. 
$L^\pinv_\PS(\mG, \mX; \lambda) \defeq  \frac{\lambda }{n_{\PS}} \| \mG \mZ_\mcS^\PS - \mX^\PS \|_F^2 + \frac{1}{n} \| (\mI_d - \mG \mG^\pinv) \mX \|_F^2$,
let $\solPinv(k) \defeq \{ \mG \in \bbR^{d\times k}: L_\PS^\pinv(\mG, \mX, \lambda) = 0 \}$ be the set of interpolating solutions. Then,
\begin{enumerate}
    \item $\solPinv(k) = \emptyset$ if $n > k$.
    \item $\solPinv(k)$ has only one element when $n = k$.
    \item 
    $\solPinv(k)$ an affine space of dimension $(k-n)d$ when $k > n$.
\end{enumerate}

\label{thm:solT}
\end{restatable}

\section{Nonlinear GANs: Double descent and faster training}
\label{sec:real_gans}

In this section we show that double 
descent can occur in nonlinear, multilayer GANs trained with pseudo-supervision. Finding the right experimental setting for double descent was difficult because the level of parameterization is much harder to quantify in a multilayer network. We still determined the overparameterization solely by modifying the latent dimensionality $k$ and not by making the networks wider or deeper. The right side of \cref{fig:real_gan_dd} shows double descent for our pseudo-supervised model. We trained a total of 430 GANs (with different latent dimensionalities and initializations) to make that figure, which is why a study like this would be computationally prohibitive on models that take a significant amount of time to train.

We also found that these realistic GANs trained with pseudo-supervision converge to a good solution much faster than they would have without the pseudo-supervision. \cref{fig:heatmaps,fig:real_gan_dd,fig:celeba_traces,fig:heatmapsCeleba} show the test errors as training progressed for different latent dimensionalities. The pseudo-supervised models converge much faster and performed very well for models trained on both MNIST and CelebA. On the MNIST training, the pseudo-supervised models converged to the lowest test error after only about 750 epochs compared to about 1,500 epochs in the baseline case. On the CelebA training, the pseudo-supervised models converged to the lowest test error after only about 10,000 epochs compared to about 40,000 epochs in the baseline case.

The test error in \cref{fig:real_gan_dd} for the MNIST baseline had an initial dip then continued up to high levels around epoch 948, suggesting overfitting similar to what we saw in the linear models.
We suspect that this overfitting was reduced as we continued to train because of some internal regularization, such as the batch norm in the model.

We performed these experiments with some non-standard procedures to aid in our understanding of generalization and double descent phenomena in GANs. In this work, we are not concerned with training state-of-the-art GANs. For this reason, our experiments are on MNIST~\citep{mnist} and CelebA~\citep{celeba}. Since MNIST is not very complex, we only use a random subset of 4,096 training data points and perform gradient descent  using a gradient penalized Wasserstein GAN\footnote{
The architecture implementation can be found \href{https://github.com/eriklindernoren/PyTorch-GAN}{here} %
} 
(for SGD results, see \cref{app:exp on real gans}). Commonly used performance metrics such as FID~\citep{FID} and IS~\citep{salimans2016improved} are made for natural images since they use the Inception v3~\citep{szegedy2016rethinking} model trained on ILSVRC 2012~\citep{ILSVRC15}. Therefore, we use the geometry score~\citep{khrulkov2018geometry}, which is better suited for MNIST\footnote{
The geometry score implementation can be found \href{https://github.com/KhrulkovV/geometry-score}{here}}. 
The experiments on CelebA only uses a random subset of 128 training data points for the same reasons. However, we use FID to evaluate the CelebA experiments with FID evaluated using 128 images for computational efficiency.
See \cref{app:exp on real gans} for more details on the training.

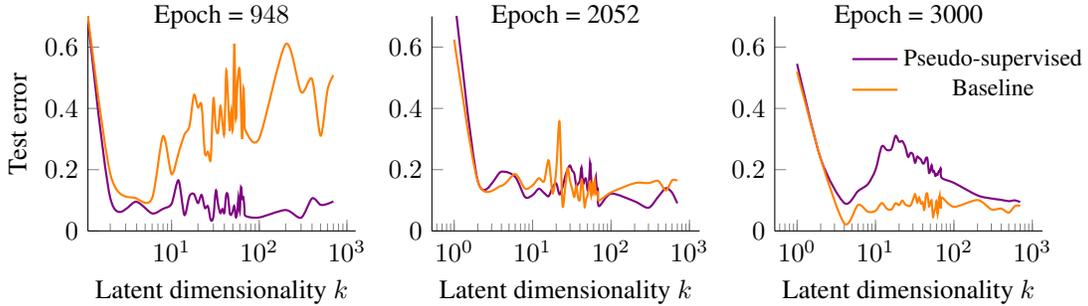
\begin{figure*}[t]
    \centering
    
    \newcommand\figWidth{0.32}
    \pgfplotsset{
        customAxis/.style={
            xlabel={\LATENTSPACE},
            axis x line*=bottom,
            axis y line*=left,
            ymin=0,
            ymax =0.7,
            width=1.15\textwidth,
            every axis title/.style={at={(0.5,1)}},
          },
          cycle list name=colorList,
        }
        
    \hspace{-0.6cm}
    \begin{subfigure}[t]{\figWidth\textwidth}  
        \begin{tikzpicture}
            \begin{semilogxaxis}[customAxis, title={Epoch = $948$},ylabel={Test error}]
                \addplot
                table [x=x, y=y, col sep=comma] {csv/real_gan_ps_e948.csv};
                \addplot
                table [x=x, y=y, col sep=comma] {csv/real_gan_baseline_e948.csv};
            \end{semilogxaxis}
        \end{tikzpicture}
    \end{subfigure}
    \hspace{0.4cm}
    \begin{subfigure}[t]{\figWidth\textwidth}  
        \begin{tikzpicture}
            \begin{semilogxaxis}[customAxis, title={Epoch = 2052}]
                \addplot
                table [x=x, y=y, col sep=comma] {csv/real_gan_ps_e2052.csv};
                \addplot
                table [x=x, y=y, col sep=comma] {csv/real_gan_baseline_e2052.csv};
            \end{semilogxaxis}
        \end{tikzpicture}
    \end{subfigure}
    \begin{subfigure}[t]{\figWidth\textwidth}  
        \begin{tikzpicture} \begin{semilogxaxis}[customAxis, title={Epoch = 3000}, legend style={at={(1.2,0.9)}, font=\small, draw=none}, legend entries={Pseudo-supervised,Baseline}]
                \addplot
                table [x=x, y=y, col sep=comma] {csv/real_gan_ps_e3000.csv};
                \addplot
                table [x=x, y=y, col sep=comma] {csv/real_gan_baseline_e3000.csv};
            \end{semilogxaxis}
        \end{tikzpicture}
    \end{subfigure}
    
    \caption{Test errors for multilayer, nonlinear GANs trained on the MNIST digit dataset. On the left we see that the baseline error resembles a noisy version of the test error in \cref{fig:spoon_demo}, characterized by an initial dip and then high levels of error. Our pseudo-supervision training beats the baseline here.
    As we continue to train (epoch 2052), we see that the baseline error reduces, which may be due to some kind of implicit regularization.
    On the right, our pseudo-supervised model achieves double descent at epoch 3000. Here the test error is measured by geometry score.
    }
    \label{fig:real_gan_dd}
\end{figure*}

\begin{figure}[t]
    \def\axisWidth{5.5cm}
    \def\heatMax{1.0}
    \centering
    \begin{subfigure}[t]{0.4\textwidth} 
        \begin{tikzpicture}
            \begin{axis}[
                width=\axisWidth,
                x label style={at={(0.5,-0.1)}},
                xlabel=\LATENTSPACE,
                y label style={at={(-0.05,0.5)}},
                ylabel={Epoch},
                colormap/hot,
                colorbar,
                colorbar style={
                    title={Test Error},
                    yticklabel style={
                        /pgf/number format/.cd,
                        fixed,
                        precision=1,
                        fixed zerofill,
                    },
                },
                title={Baseline},
                enlargelimits=false,
                axis on top,
                point meta min=0,
                point meta max=\heatMax,
                xmin=1,
                xmax=700,
                ymin = 160,
                ymax=3000,
                xtick={1,700},
                xticklabels={1,700},
                ytick={160,3000},
                yticklabels={160,3000},
            ]
                
                \addplot [matrix plot*,point meta=explicit] file  {csv/baseline_heatmap.csv};
            \end{axis}
        \end{tikzpicture}
    \end{subfigure}
    \hspace{.5cm}
    \begin{subfigure}[t]{0.4\textwidth} 
        \begin{tikzpicture}
            \begin{axis}[
                width=\axisWidth,
                x label style={at={(0.5,-0.1)}},
                xlabel=\LATENTSPACE,
                y label style={at={(-0.05,0.5)}},
                ylabel={Epoch},
                colormap/hot,
                colorbar,
                colorbar style={
                    title={Test Error},
                    yticklabel style={
                        /pgf/number format/.cd,
                        fixed,
                        precision=1,
                        fixed zerofill,
                    },
                },
                title={Pseudo-supervised},
                enlargelimits=false,
                axis on top,
                point meta min=0,
                point meta max=\heatMax,
                xmin=1,
                xmax=700,
                ymin = 160,
                ymax=3000,
                xtick={1,700},
                xticklabels={1,700},
                ytick={160,3000},
                yticklabels={160,3000},
            ]
                
                \addplot [matrix plot*,point meta=explicit] file  {csv/ps_heatmap.csv};
            \end{axis}
        \end{tikzpicture}
    \end{subfigure}
    \caption{
    These test error heatmaps for multilayer, nonlinear GANs trained on MNIST show that the pseudo-supervised models converge faster than the baseline models. The baseline model has high test error until around epoch 1500, unlike the pseudo-supervised models which have the test error drop off at around epoch 750. The baseline model only beats the pseudo-supervised model later in the training (around epoch 2500), when the pseudo-supervised loss increases and admits a double descent shape. The test error is measured by geometry score here.
    The $k$-axis is plotted so that each column corresponds to the next entry for better visualization, even though the spacing is $k \in \{1, 2, 4, 6, \dots, 70, 100, 200, 300, \dots, 700\}$. }
    \label{fig:heatmaps}
\end{figure}

\begin{figure}[t]
    \def\axisWidth{5.5cm}
    \def\heatMax{275.0}
    \def\heatMin{75.0}
    \centering
    \begin{subfigure}[t]{0.4\textwidth} 
        \begin{tikzpicture}
            \begin{axis}[
                width=\axisWidth,
                x label style={at={(0.5,-0.1)}},
                xlabel=\LATENTSPACE,
                ylabel={Epoch},
                colormap/hot,
                colorbar,
                colorbar style={
                    title={Test FID},
                    yticklabel style={
                        /pgf/number format/.cd,
                        fixed,
                        precision=0,
                        fixed zerofill,
                    },
                },
                title={Baseline},
                enlargelimits=false,
                axis on top,
                point meta min=0,
                point meta max=\heatMax,
                point meta min=\heatMin,
                xmin=2,
                xmax=8192,
                ymin = 1000,
                ymax=100000,
                xticklabels={2,128,8192},
                xtick={2,128,8192},
                ytick={1000,10000,100000},
                scaled y ticks=false,
                xmode=log,
                log basis x={2},
                ymode=log,
                log basis y={10},
            ]
                
                \addplot [matrix plot*,point meta=explicit] file  {csv/bl_heatmap_celeba.csv};
            \end{axis}
        \end{tikzpicture}
    \end{subfigure}
    \hspace{1cm}
    \begin{subfigure}[t]{0.4\textwidth} 
        \begin{tikzpicture}
            \begin{axis}[
                width=\axisWidth,
                x label style={at={(0.5,-0.1)}},
                xlabel=\LATENTSPACE,
                ylabel={Epoch},
                colormap/hot,
                colorbar,
                colorbar style={
                    title={Test FID},
                    yticklabel style={
                        /pgf/number format/.cd,
                        fixed,
                        precision=0,
                        fixed zerofill,
                    },
                },
                title={Pseudo-supervised},
                enlargelimits=false,
                axis on top,
                point meta min=0,
                point meta max=\heatMax,
                point meta min=\heatMin,
                xmin=2,
                xmax=8192,
                ymin = 1000,
                ymax=100000,
                xticklabels={2,128,8192},
                xtick={2,128,8192},
                ytick={1000,10000,100000},
                scaled y ticks=false,
                xmode=log,
                log basis x={2},
                ymode=log,
                log basis y={10},
            ]
                
                \addplot [matrix plot*,point meta=explicit] file  {csv/ps_heatmap_celeba.csv};
            \end{axis}
        \end{tikzpicture}
    \end{subfigure}
    \caption{
    These test error (measured via FID) heatmaps for multilayer, nonlinear GANs trained on CelebA show that the pseudo-supervised models converge faster than the baseline models. The baseline model has high test error until around epoch 40,000, unlike the pseudo-supervised models which have the test error drop off at around epoch 10,000. The baseline model only beats the pseudo-supervised model later in the training, however only in the lower parameterized regime. The $k$-axis is plotted for $k \in \{2, 4, 8, \dots, 4096, 8192\}$ and the epoch axis is plotted from $1000$ to $100{,}000$. 
    }
    \label{fig:heatmapsCeleba}
\end{figure}

\begin{figure*}[t]
    \centering
    
    \newcommand\figWidth{0.32}
    \pgfplotsset{
        customAxis/.style={
            xlabel={\LATENTSPACE},
            axis x line*=bottom,
            axis y line*=left,
            ymin=65,
            ymax =220,
            xmin=2,
            xmax=8192,
            width=1.15\textwidth,
            every axis title/.style={at={(0.5,1)}},
          },
          cycle list name=colorList,
        }
        
    \hspace{-0.6cm}
    \begin{subfigure}[t]{\figWidth\textwidth}  
        \begin{tikzpicture} \begin{axis}[customAxis, title={Epoch = 10k},xmode=log, log basis x={2}, ylabel={Test FID}]
                \addplot
                table [x=latents, y=meanPS, col sep=comma] {csv/celeba_BL_e10k.csv};
                \addplot
                table [x=latents, y=meanBL, col sep=comma] {csv/celeba_BL_e10k.csv};
            \end{axis}
        \end{tikzpicture}
    \end{subfigure}
    \hspace{0.4cm}
    \begin{subfigure}[t]{\figWidth\textwidth}  
        \begin{tikzpicture} \begin{axis}[customAxis, title={Epoch = 25k},xmode=log, log basis x={2}]
                \addplot
                table [x=latents, y=meanPS, col sep=comma] {csv/celeba_BL_e25k.csv};
                \addplot
                table [x=latents, y=meanBL, col sep=comma] {csv/celeba_BL_e25k.csv};
            \end{axis}
        \end{tikzpicture}
    \end{subfigure}
    \begin{subfigure}[t]{\figWidth\textwidth}  
        \begin{tikzpicture} \begin{axis}[customAxis, title={Epoch = 100k}, legend style={at={(0.9,0.9)}, font=\small, draw=none}, legend entries={Pseudo-supervised,Baseline},xmode=log, log basis x={2}]
                \addplot
                table [x=latents, y=meanPS, col sep=comma] {csv/celeba_BL_e100k.csv};
                \addplot
                table [x=latents, y=meanBL, col sep=comma] {csv/celeba_BL_e100k.csv};
            \end{axis}
        \end{tikzpicture}
    \end{subfigure}
    
    \caption{Test errors (measured via FID) for a multilayer, nonlinear GAN trained on the CelebA dataset. On the left we see that the baseline error is quite high and our pseudo-supervision training has almost converged after only 10k epochs.
    As we continue to train (epoch 25k), we see that the baseline error reduces along with the pseudo-supervised error.
    On the right, we see that although the baseline error can even beat the pseudo-supervised error for certain model parameterizations, this is not the case for highly overparameterized models where pseudo-supervision still outperforms the baseline.
    }
    \label{fig:celeba_traces}
\end{figure*}

\section{Conclusion}

We have demonstrated that pseudo-supervision can be used to achieve beneficial double descent phenomena in unsupervised models, specifically in linear GANs and nonlinear, multilayer GANs. Pseudo-supervision can help accelerate training and lower generalization error. This opens up areas of research in understanding overparameterization and double descent behavior in unsupervised models. Moreover, our findings suggest that an empirical study on ImageNet with more complex networks is beneficial to improve state-of-the-art generalization error and convergence speed.

\clearpage 
\bibliography{main}
\bibliographystyle{tmlr}

\fi 
\if\showappA1

\newpage
\appendix
\onecolumn

\section*{Appendices}
The appendices below support the main paper as follows. \cref{app:pseudometric} provides additional details on how subsampling (or zeroing) coordinates of the data is equivalent to training with a pseudometric as discussed in \cref{sec:supervision} of the main paper. 
In \cref{app:ps unitary math} we expand on pseudo-supervision and explain when it can be used to mimic supervision. 
\cref{app:exp on linear stuff} includes additional empirical results and details for the linear GAN problems from \cref{sec:supervision,sec:grad1,sec:grad2,sec:grad3} of the main paper.
\cref{app:exp on real gans} provides additional experimental results and details for the multilayer, nonlinear GAN from \cref{sec:real_gans} of the main paper; we also include results for SGD-based training of GANs using the complete MNIST dataset.

\section{Training with pseudometric and subsampling data}
\label{app:pseudometric}

The problem in \cref{sec:noDD} is that we are optimizing over a metric $q$, which has a definiteness property, i.e., $q(x,y) = 0$ if and only if $x = y$. If we relax this property, we are left with a pseudometric; similarly, we can relax this property to obtain a non-definite $f$-divergence. Interestingly, we found that subsampling coordinates of the data is equivalent to using a pseudometric, and we will use subsampling in the next section to control our level of parameterization. 
We provide a detailed discussion on the pseudometric formulation in \cref{app:pseudometric} since  several papers on double descent use feature subsampling to control the parameterization of the model \citep{belkin2019two,dar2020subspace,dar2020double}.

\begin{proposition}
    Let $q_d$ be any metric on $\bbR^d$ and $q_k$ be any metric on $\bbR^k$. Suppose that $\vx_d, \vy_d \in \bbR^d$ are subsampled (in the same way) to $\vx_k, \vy_k \in \bbR^k$, with $k < d$, i.e., the elements of $\vx_k$ and $\vy_k$ are a subset of the elements of $\vx_d$ and $\vy_d$, respectively. Then, the function $q': \bbR^d \times \bbR^d \to \bbR$ defined as $q'(\vx_d, \vy_d) = q_k(\vx_k, \vy_k)$ is a pseudo-metric in $\bbR^d$.
\end{proposition}
\begin{proof}
    We first show that $q'$ is strictly semi-definite. For all $\vx_d\in\bbR^d$ we have that $q'(\vx_d, \vx_d) = q_k(\vx_k, \vx_k) = 0$. Note however, that if we modify an element of $\vx_d$ which is not present in $\vx_k$  to create $\vx_d' \ne \vx_d$ we will still get $q'(\vx_d, \vx_d') = q_k(\vx_k, \vx_k) = 0$, implying that $q'$ is not strictly definite and hence that $q'$ cannot be a proper metric.
    Moreover, we see that $q'$ is symmetric because for all $\vx_d, \vy_d \in \bbR^d$ we have that $q'(\vx_d, \vy_d) = q_k(\vx_k, \vy_k) = q_k(\vy_k, \vx_k) = q'(\vy_d, \vx_d)$.

    Now we must show that the triangle inequality holds for $q'$. Suppose that $\vx_d, \vy_d, \vz_d \in \bbR^d$ are subsampled to $\vx_k, \vy_k, \vz_k \in \bbR^k$. Then,
    \begin{align*}
        q'(\vx_d, \vz_d)
        &=
        q_k(\vx_k, \vz_k)
        \\
        &\leq
        q_k(\vx_k, \vy_k) + q_k(\vy_k, \vz_k)
        &\text{(triangle inequality on $q_k$)}
        \\
        &= q'(\vx_d, \vy_d) + q'(\vy_d, \vz_d),
    \end{align*}
    as desired. Thus, $q'$ is a pseudo-metric.
\end{proof}

\subsection{Subsampling the data features}

In \cref{sec:noDD} we saw that if we optimize an objective function which is a metric~\citep{babyRudin} or an $f$-divergence~\citep{renyi1961measures,nowozin2016f}, the resulting generalization error will be constant for any interpolating solution. This is due to the definiteness of the metric or $f$-divergence. In this section we will relax this property for the $2$-Wasserstein metric~\citep{villani2003topics,villani2008optimal}; extensions to this relaxation can be done for $f$-divergences and other metrics. The resulting mathematical object is called a pseudometric~\citep{royden1988real}, which has been studied thoroughly in the context of $L^p$ metrics in Banach spaces~\citep{axler2020measure,royden1988real}.

\begin{definition}
\label{pseudometric}
We denote $q_d$ to be the standard Euclidean metric on $\bbR^d$~\citep{babyRudin}. 
Let $P(\bbR^d)$ be the set of all probability distributions defined on the measurable space $(\bbR^d, \mcB(\bbR^d))$, where $\mcB(\bbR^d)$ is the Borel $\sigma$-algebra on $\bbR^d$~\citep{axler2020measure,papaRudin}.
We denote $\mcW_d: P(\bbR^d) \times P(\bbR^d) \to \bbR$ to be the $2$-Wasserstein metric:
\[
    \mcW_d(P,P')
    =
    \sqrt{
    \inf_{\gamma \in \Pi(P,P')}
    \int_{\bbR^d \times \bbR^d} q_d^2(\vx,\vy) d\gamma(\vx,\vy)},
\]
where $\gamma \in \Pi(P,P')$ is any joint distribution of $P$ and $P'$.
For a set $A \subset \{1, \dots, d\}$, we define the pseudometric $\mcW_{d,A}: \bbR^d \times \bbR^d \to \bbR$ to be the $2$-Wasserstein metric on $\bbR^{d-|A|}$ on the indices not in $A$. For example, if $P_{2,\dots, d}, P'_{2,\dots, d}$ are the marginals (after integrating out the first component) of $P$ and $P'$, respectively, then
\[
    \mcW_{d,\{1\}}(P,P')
    \bydef
    \mcW_{d-1}(P_{2,\dots, d}, P'_{2,\dots, d}).
\]
Clearly, $\mcW_{d,A}$ is a pseudometric as it derives all metric properties from $\mcW_{d-|A|}$ except the definiteness property.
\end{definition}

This pseudometric is constructed by integrating out certain coordinates of the distributions and using a metric on the resulting marginal distributions. Therefore it is possible to have zero distance between two distributions that differ along the coordinates which are integrated out. This is equivalent to subsampling or zeroing out the desired coordinates, which we will shortly show.
Thus, for the linear case, we can learn a generator $\mG$ which maps our latent space to $\bbR^d$ and which learns the training data distributions $p_f$ except for the ignored coordinates. Of course, now we have a whole (affine) subspace of matrices $\mG \in \bbR^{d\times k}$ that we can learn. In other words, using a pseudometric, an interpolating solution $\mG \in \bbR^{d\times k}$ forms an affine subspace of $\bbR^{d\times k}$ if modified along the ignored coordinates. As we will see in \cref{app:ps unitary math}, we can also transform $\mG$ by an orthonormal transformation to get more degrees of freedom than just this affine space. In this setting, the min-norm solution will not project anything on the ignored coordinates.

\begin{theorem}
Let $P$ and $P'$ be two distributions defined on $\bbR^d$. Let $A \subset \{1,\dots, d\}$ be a subset of the axis indices. 
We define a new distribution $Q_A$ on $\bbR^d$ as the product of $|A|$ univariate point masses at $0$ and the marginal distribution $P_{A^C}$. The point masses are located so that the univariate marginals of $Q_A$ are point masses along the coordinates in $A$.
We define $Q'_A$ similarly. Then,
\begin{equation}
    \mcW_{d,A}(P,P')
    \bydef
    \mcW_{d-|A|}(P_{A^C}, P'_{A^C})
    =
    \mcW_d(Q_A,Q'_A).
    \label{eq:2wassersteinPseudo}
\end{equation}
\label{thm:pseudometricWfull}
\end{theorem}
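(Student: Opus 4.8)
The plan is to establish the only non-trivial equality, $\mcW_{d-|A|}(P_{A^C},P'_{A^C}) = \mcW_d(Q_A,Q'_A)$, by exhibiting a cost-preserving two-way correspondence between the transport plans of $(Q_A,Q'_A)$ and those of $(P_{A^C},P'_{A^C})$; the first equality in \cref{eq:2wassersteinPseudo} is nothing but the defining identity from \cref{pseudometric}. First I would identify $\bbR^d$ with $\bbR^A \times \bbR^{A^C}$ by the obvious coordinate rearrangement --- a permutation of $\{1,\dots,d\}$, hence an isometry of $\bbR^d$ under which both $\mcW_d$ and the construction of $Q_A$ are invariant --- so that it suffices to treat $A = \{1,\dots,a\}$ with $a \bydef |A|$ and write each $\vx \in \bbR^d$ as $\vx = (\vx_A, \vx_{A^C}) \in \bbR^a \times \bbR^{d-a}$. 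Then $Q_A = \delta_{\bzero} \otimes P_{A^C}$ and $Q'_A = \delta_{\bzero} \otimes P'_{A^C}$, with $\delta_{\bzero}$ the Dirac mass at the origin of $\bbR^a$.

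The crux is a cost-splitting observation: for any $\gamma \in \Pi(Q_A, Q'_A)$, the two $\bbR^a$-marginals of $\gamma$ on the $A$-blocks are both $\delta_{\bzero}$, so $\gamma$-almost surely $\vx_A = \vy_A = \bzero$, and hence $q_d^2(\vx,\vy) = \|\vx_A - \vy_A\|_2^2 + \|\vx_{A^C} - \vy_{A^C}\|_2^2 = \|\vx_{A^C} - \vy_{A^C}\|_2^2$ $\gamma$-a.s. Pushing $\gamma$ forward under $(\vx,\vy) \mapsto (\vx_{A^C}, \vy_{A^C})$ produces a plan $\tilde\gamma \in \Pi(P_{A^C}, P'_{A^C})$ with $\int q_d^2 \, d\gamma = \int q_{d-a}^2 \, d\tilde\gamma$, which after taking infima yields $\mcW_d(Q_A,Q'_A) \geq \mcW_{d-a}(P_{A^C},P'_{A^C})$. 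For the reverse direction I would run the construction backwards: given $\tilde\gamma \in \Pi(P_{A^C},P'_{A^C})$, let $\gamma$ be the law of $\big((\bzero, \vx_{A^C}),(\bzero,\vy_{A^C})\big)$ when $(\vx_{A^C},\vy_{A^C}) \sim \tilde\gamma$; a short product-measure check shows its two $\bbR^d$-marginals are $\delta_{\bzero}\otimes P_{A^C} = Q_A$ and $\delta_{\bzero}\otimes P'_{A^C} = Q'_A$, so $\gamma \in \Pi(Q_A,Q'_A)$ and the transport costs again coincide, giving the opposite inequality. Equating the two bounds and using that $\mcW$ is defined as a square root finishes the proof.

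The genuinely content-bearing step is this cost-splitting correspondence; everything else is measure-theoretic bookkeeping with no real obstacle. The two small points I would write out carefully are (i) the claim that $\gamma$-almost surely $\vx_A = \vy_A = \bzero$, which follows because each marginal of a coupling equals the prescribed measure and a Borel set carrying full mass under one marginal carries full mass under the joint law, and (ii) that the lifting map $\tilde\gamma \mapsto \gamma$ indeed yields a probability measure on $\bbR^d \times \bbR^d$ with the required marginals, which is the standard fact that such a pushforward/product construction preserves marginals. No assumptions on $P,P'$ beyond being Borel probability measures on $\bbR^d$ are needed, and the infima are allowed to be $+\infty$ without affecting the argument.
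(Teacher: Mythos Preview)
Your proof is correct and follows essentially the same idea as the paper's: both exploit that under any coupling of $(Q_A,Q'_A)$ the $A$-coordinates are almost surely zero, so the transport cost reduces to the $A^C$-block. The paper packages this as a single Tonelli-based computation that passes from $\inf_\gamma$ to $\inf_{\gamma_{A^C}}$ in one step, whereas you make the coupling correspondence explicit via two inequalities (pushforward and lift); your version is slightly more careful about why the infimum transfers, but the underlying argument is the same.
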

\begin{proof}
An application of Tonelli's Theorem~\citep{axler2020measure} shows that
\begin{align*}
    \mcW_{d}(Q_A, Q_A')
    &=\sqrt{
    \inf_\gamma \int_{\bbR^d \times \bbR^d} q_d^2(\vx,\vy)d\gamma 
    }
    \\
    &=\sqrt{
    \inf_\gamma \sum_{i=1}^d \int_{\bbR^d \times \bbR^d} |x_i - y_i|^2 d\gamma 
    }
    \\
    &=\sqrt{
    \inf_\gamma 
    \sum_{i \in A}
    \int_{\bbR^d \times \bbR^d} |x_i - y_i|^2 d\gamma 
    +
    \sum_{i \in A^C} \int_{\bbR^d \times \bbR^d} |x_i - y_i|^2 d\gamma 
    }
    \\
    &=\sqrt{
    \inf_\gamma 
    \int_{\bbR^{2|A|}} \sum_{i\in A}|x_i - y_i|^2
    d\gamma_A
    +
    \int_{\bbR^{2(d-|A|)}} \sum_{i\in A^C}|x_i - y_i|^2
    d\gamma_{A^C}
    }
    &\text{(Tonelli)}
    \\
    &=\sqrt{
    \inf_{\gamma_{A^C}} 
    \int_{\bbR^{2(d-|A|)}} \sum_{i\in A^C}|x_i - y_i|^2
    d\gamma_{A^C}
    }
    &\text{($\ast$)}
    \\
    &= \mcW_{d-|A|}(P_{A^C}, P'_{A^C})
    \\
    &=
    \mcW_{d,A}(P,P'),
\end{align*}
where $\gamma_A$ and $\gamma_{A^C}$ are the joints of the marginals over $A$ and over $A^C$, respectively.
We also use independence when using Tonelli's Theorem, because $Q_A$ and $Q_A'$ are product measures by construction.
In $(\ast)$, we pick $\gamma_A$ to be the independent joint distribution so that each random variable with index in $A$ is independent. Since each of these random variables is identical, the integral term on the left vanishes and is therefore the minimizer of the infimum. 
\end{proof}

\cref{thm:pseudometricWfull} shows that we can train with a pseudometric by simply zeroing the coordinates of the data that we wish to ignore; alternatively, we can also subsample the features so that we keep the features with indices in $\mcA^C$. This allows us to consider a pseudometric $\mcW_{d,A}$ which is invariant to the data features with indices in $A$.
Suppose that we instead want $\mcW_{d,A}$ to be invariant to a specific subspace. It turns out that these two concepts are closely related.

\begin{theorem}
Let $V\subset \bbR^d$ be a subspace spanned by the orthonormal vectors $\vv_1, \dots, \vv_m$; the rest of $\bbR^d$ is spanned by $\vv_{m+1}, \dots, \vv_{d}$ so that $\{\vv_i\}_{i=1}^d$ is an orthonormal basis for $\bbR^d$. We also have a data matrix $\mX \in \bbR^{d\times n}$.
Then, we can construct a pseudometric $\mcW_{d,V}$ to be invariant to the subspace $V$ by replacing the first $m$ rows of $\mU^\transp \mX$ with zeros for $\mU = \begin{bmatrix}\vv_1 & \hdots & \vv_d\end{bmatrix} \in \bbR^{d\times d}$.
\end{theorem}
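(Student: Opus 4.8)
The plan is to reduce the statement to \cref{thm:pseudometricWfull} by an orthogonal change of coordinates that rotates $V$ onto a coordinate subspace. Let $\mU$ be the orthogonal $d \times d$ matrix whose columns are $\vv_1,\dots,\vv_d$, so that $\vy \mapsto \mU^\transp\vy$ is a linear isometry of $\bbR^d$ carrying $V = \mathrm{span}(\vv_1,\dots,\vv_m)$ onto the coordinate subspace $\mathrm{span}(\mathbf{e}_1,\dots,\mathbf{e}_m)$ indexed by $A \bydef \{1,\dots,m\}$. For distributions $P,P'$ on $\bbR^d$ I would write $\widetilde P,\widetilde P'$ for their pushforwards under $\mU^\transp$ and define $\mcW_{d,V}(P,P') \bydef \mcW_{d,A}(\widetilde P,\widetilde P')$, with $\mcW_{d,A}$ the pseudometric of \cref{pseudometric}; the claim to establish is that this equals the ordinary $2$-Wasserstein distance between the distributions obtained by zeroing the first $m$ rows of $\mU^\transp\mX$ (and, correspondingly, of the data underlying $P'$).

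First I would check that $\mU^\transp$ preserves the $2$-Wasserstein metric: any coupling $\gamma$ of $P$ and $P'$ pushes forward under $\mU^\transp \times \mU^\transp$ to a coupling of $\widetilde P$ and $\widetilde P'$ with identical transport cost, since $q_d^2(\mU^\transp\vx,\mU^\transp\vy) = q_d^2(\vx,\vy)$; the same argument with $\mU$ in place of $\mU^\transp$ gives the reverse inequality, so $\mcW_d(P,P') = \mcW_d(\widetilde P,\widetilde P')$. Then I would invoke \cref{thm:pseudometricWfull} for the pair $\widetilde P,\widetilde P'$ and the index set $A$, which gives $\mcW_{d,A}(\widetilde P,\widetilde P') = \mcW_d(Q_A, Q'_A)$, where $Q_A$ places point masses at $0$ along the coordinates in $A$ and has the marginal of $\widetilde P$ along $A^C$ (and $Q'_A$ is built from $\widetilde P'$ in the same way). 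At the sample level $\widetilde P$ is the empirical distribution of the columns of $\mU^\transp\mX$ and $Q_A$ is the empirical distribution of those columns with their first $m$ entries overwritten by $0$, which is exactly the construction in the statement.

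Finally I would verify the two remaining assertions. That $\mcW_{d,V}$ is a pseudometric is immediate: nonnegativity, symmetry, and the triangle inequality are inherited from $\mcW_{d,A}$, which is a pseudometric by \cref{pseudometric} (being the genuine metric $\mcW_{d-m}$ precomposed with marginalization), and these properties pass through the bijection $P \mapsto \widetilde P$. For invariance to $V$: if $P$ and $P'$ have the same image under the orthogonal projection onto $V^\perp$ --- equivalently, $\widetilde P$ and $\widetilde P'$ share the same marginal on the coordinates $m+1,\dots,d$ --- then $Q_A = Q'_A$ and hence $\mcW_{d,V}(P,P') = 0$, so modifying a distribution only along $V$ does not change $\mcW_{d,V}$. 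The one step I expect to need care is the bookkeeping that identifies ``zeroing the first $m$ rows of $\mU^\transp\mX$'' with the measure-theoretic object $Q_A$, i.e.\ commuting the pushforward with marginalization and with the empirical-measure construction; the isometry property of $\mU^\transp$ and the pseudometric axioms are routine, and the substantive content is already supplied by \cref{thm:pseudometricWfull}.
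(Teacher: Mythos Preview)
Your proposal is correct and follows the same core idea as the paper: rotate by $\mU^\transp$ so that $V$ becomes the coordinate subspace $\mathrm{span}(\mathbf{e}_1,\dots,\mathbf{e}_m)$, and then appeal to the coordinate-zeroing construction of \cref{thm:pseudometricWfull}. The paper's own proof is far terser---it only records the linear-algebra fact that $\mU^\transp$ carries $V$ onto the first $m$ axes and that zeroing those rows is the corresponding orthogonal projection---whereas you additionally spell out the Wasserstein isometry under orthogonal maps, the explicit definition $\mcW_{d,V}(P,P') \bydef \mcW_{d,A}(\widetilde P,\widetilde P')$, and the verification of the pseudometric axioms and $V$-invariance; this extra bookkeeping is sound and arguably makes the argument more self-contained.
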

\begin{proof}
Let $\vv \in V$ be given. Then, we can write $\vv = \sum_{i=1}^m c_i \vv_i$. Clearly, we have that $\mU^\transp \vv = \sum_{i=1}^m c_i \mU^\transp \vv_i = \begin{bmatrix}c_1  &\hdots&  c_m & 0 & \hdots
&  0\end{bmatrix}^\transp$. Similarly, if $\vw \in V$ is arbitrary, then  we have that $\mU^\transp \vw = \begin{bmatrix}a_1 &\hdots&  a_m & a_{m+1} & \hdots
&  a_{d}\end{bmatrix}^\transp$ for some numbers $a_i \in \bbR$. Hence, by replacing the first $m$ coordinates by $0$ we project onto the subspace orthogonal to $V$. Applying $\mU^\transp$ to each column of $\mX$ is equivalent to computing $\mU^\transp \mX$.
\end{proof}

Thus, without loss of generality, we consider only subsampling feature indices. If we want to ignore a subspace, we simply multiply our data matrix by the correct matrix $\mU$.

\subsection{Subsampling the latent vector coordinates}
In the previous section, we considered subsampling the data features. However, we know that supervision has enabled double descent in PCA-type problems~\citep{dar2020subspace}. 
Thus, we would like to study supervision in the GAN context, as discussed in \cref{sec:supervision}. In a supervised linear regression setting using the $2$-norm loss, we know that we must take a pseudoinverse of the input matrix~\citep{hastie2009elements}, which induces double descent. In this setting, that is the latent space matrix $\mZ$. Therefore, we enable double descent by subsampling the latent vector coordinates. Doing this is very similar to subsampling the features in the data space. For example, if we zero out the first coordinate of the latent distribution, we are essentially zeroing out the subspace corresponding to the first column of the matrix $\mG$. Since we learn $\mG$, this is a type of adaptive pseudometric procedure, where we learn which subspaces to use and which subspaces to ignore.

\section{Pseudo-supervision and the curse of dimensionality}
\label{app:ps unitary math}
This appendix provides further detail regarding the scenario described in \cref{sec:ps}.
Suppose that $\mG \in \bbR^{d\times m}$ is a solution which provides zero test error. Now, let $\vz \in \bbR^m$ correspond to the true vector which generates $\vx \in \bbR^d$ so that $\mG \vz = \vx$. Now suppose that $\vz_\PS \in \bbR^m$ is any vector so that $\|\vz_\PS\|_2 = \|\vz\|_2$. Then, we can find an orthonormal matrix $\mU \in \bbR^{m\times m}$  so that $\mU \vz_\PS = \vz$. We see that $\mG \mU$ is also a solution which  gives zero train error, because the isotropic covariance matrix of the generated distribution is not changed if we right multiply $\mG$ with an orthonormal matrix~\citep{horn2012matrix}. However, if we pick $\vz_\PS$ from $\mcN(0, \mI_m)$ where $m$ is large, we see that $\|\vz_\PS\|_2 = \|\vz\|_2$ with high probability because high dimensional Gaussians concentrate on a thin shell in high-dimensional space~\citep{bishop2006pattern}. This is typically considered a bad thing, hence its name: the curse of dimensionality. However, here we use the curse of dimensionality to allow fabricated latent vectors $\vz_\PS$ to mimic supervised latent vectors $\vz$. Moreover, we can come up with linearly independent pseudo-supervised latent vectors up to $m$ times, after which we can no longer find an $m\times m$ orthonormal matrix $\mU$. The more pseudo-supervised samples we have, the fewer matrices $\mG$ we can learn, resulting in faster gradient descent convergence since the feasible set is smaller.

We will encounter a problem if $k < m$, i.e., if the latent dimension we pick is lower than the true latent dimension, because we cannot learn a perfect representation (assuming that the linear operator  $\mGam$ in the data model is full rank). However, if we let $k$ be larger than $m$, then we can learn a solution which gives us zero test error. 
Although the true vectors are $m$-dimensional, we can always learn a generator matrix $\mG$ which ignores certain coordinates.
For such solutions, we can also construct pseudo-supervised samples up to $k$ times. Therefore the overparameterized regime, where $k$ is large, is very desirable from the pseudo-supervised point of view.

If we fix $n_\PS$ to some value, note that by the above argument, we will incur a penalty if $k < n_\PS$ because we will not be able to find a suitable $\mU$. However, if $k$ is larger than $m$ and larger than $n_\PS$, we can mimic the behavior of supervised samples because we will be able to find an orthonormal matrix which will transform those pseudo-supervised latent vectors into vectors that equal the true vectors along $m$ coordinates. For this reason, we consider pseudo-supervision when $k$ is large.

\section{The dimension of the solution sets}
\label{app:solution sets}

\subsection{The unsupervised solutions}

\label{sec:reduction in solution space}

\THMsolUnsupT*

\begin{proof}[Proof for \cref{thm:solUnsupT}]
Suppose that $n>k$ and, for the sake of a contradiction, that $\solUnsupT(k) \ne \emptyset$. This means that there exists a $\mG \in \solUnsupT(k)$ so that $ \|(\mI_d - \mG \mG^\transp) \mX \|_F^2 = 0$. Thus, for each column $\vx_i$ of $\mX$ we have that $\|(\mI_d - \mG\mG^\transp) \vx_i \|_2^2 = 0$. In other words, we know that $\mG \mG^\transp \vx_i = \vx_i$ implies that $\mG\mG^\transp$ has at least $n$ eigenvalues of $1$ corresponding to $n$ eigenvectors since the samples are linearly independent.
By the spectral theorem (Theorem 2.5.6 in~\cite{horn2012matrix}), $\mG \mG^\transp$ is diagonalizable and thus
$\rank(\mG\mG^\transp) \geq n$. However, we know that $\rank(\mG \mG^\transp) \leq k$ by simple rank inequalities (0.4.5 (a) in~\cite{horn2012matrix}), a contradiction. Therefore, we know that $\solUnsupT(k) = \emptyset$.

Now suppose that $n = k$. We take a singular value decomposition of $\mX$ into $\mX = \mU \mS \mV^\transp$, where $\mU \in \bbR^{d \times d}, \mV \in \bbR^{k \times k}$ are real orthogonal matrices and $\mS \in \bbR^{d \times k}$ is zero except on the diagonal (Corollary 2.6.7 in~\cite{horn2012matrix}). Then, we let $\mG_0 = \mU_k$ be the first $k$ columns of $\mU$. We see that
\[
    \mG_0 \mG_0^\transp \mX 
    =
    \mU_k \mU_k^\transp \mU \mS \mV^\transp
    =
    \mU_k \begin{bmatrix}
    \mI_k & \bzero_{k \times d-k}
    \end{bmatrix} 
    \mS \mV^\transp
    =
    \begin{bmatrix}
    \mU_k & \bzero_{k \times d-k}
    \end{bmatrix} 
    \mS \mV^\transp
    =
    \mU \mS \mV^\transp
    =
    \mX
\]
means that $\mG_0 \in \solUnsupT(k)$ is one interpolating solution, since $(\mI_d - \mG_0 \mG_0^\transp)\mX = 0$. We will use $\mG_0$ to generate more solutions. In fact, for each real orthogonal matrix $\mU \in \bbR^{k\times k}$, we see that $\mG_0 \mU \in \solUnsupT(k)$ because $(\mG_0 \mU)(\mG_0 \mU)^\transp = \mG_0 \mU \mU^\transp \mG_0^\transp = \mG_0 \mG_0^\transp$. 
We can identify solutions of the form $\mG_0 \mU$ with the orthogonal group $O(k)$ which is a smooth manifold (in particular, a real Lie group) of dimension $\frac{k(k-1)}{2}$~\citep{lee2003smooth}.
We define $\solOrth(\mG_0) = \{\mG \in \bbR^{d\times k}: \mG = \mG_0 \mU, \mU \in O(k) \}$ and maintain that $\solOrth(\mG_0) \subset \solUnsupT(k)$.

Now, we consider interpolating solutions for when $n=k$ and show that $\solOrth(\mG_0) \supset \solUnsupT(k)$ so that $\solUnsupT(k)$ is a smooth manifold of dimension $\frac{k(k-1)}{2}$. Let $\mG \in \solUnsupT(k)$ be given. 
For any $\vx \in \text{span}\{\vx_1, \dots, \vx_n\}$, it is clear that $\mG \mG^\transp \vx = \vx$. Moreover, since $\mG \mG^\transp$ has rank $n=k$, we see that any $\vy$ that is independent of our samples must be in the null space of $\mG \mG^\transp$ so that $\mG \mG^\transp \vy = 0$. Thus, $\mG \mG^\transp$ is an orthogonal projection matrix~\citep{axler1997linear}. Thus, $\range(\mG \mG^\transp) =\text{span}(\vx_1, \dots, \vx_n)$ and the eigenvalues of $\mG \mG^\transp$ are all either $0$ or $1$.

Let $\mG = \mU \mS \mV^\transp$ be $\mG_0 = \mU_0 \mS \mV_0^\transp$ two singular value decompositions; note that the singular value matrix $\mS = \begin{bmatrix}\mI_k \\ \bzero_{d-k \times k} \end{bmatrix}$ is the same in these two because they have the same singular values.
Then, we define $\mW = \mG_0^\transp \mG$ so that
\begin{align*}
    \mG_0 \mW 
    &=
    \mG_0 \mG_0^\transp \mG
    \\
    &=
    \mG \mG^\transp \mG
    \\
    &=
    \mG \mV \mS^\transp \mU^\transp \mU \mS \mV^\transp 
    \\
    &=
    \mG.
    &\text{(Since $\mS^\transp \mS = \mI_n$)}
\end{align*}
Note that $\mW \mW^\transp = \mG_0^\transp \mG \mG^\transp \mG_0 = \mG_0^\transp \mG_0 \mG_0^\transp \mG_0 = \mI_k$ and $\mW^\transp \mW = \mG^\transp \mG_0 \mG_0^\transp \mG = \mG^\transp \mG \mG^\transp \mG= \mI_k$ imply that $\mW \in O(k)$ is a real orthogonal matrix. Thus, any arbitrary interpolating solution $\mG$ can be written as $\mG = \mG_0 \mU$ for $\mU \in O(k)$. This means that $\solOrth(\mG_0) \supset \solUnsupT(k)$ implying that $\solOrth(\mG_0) = \solUnsupT(k)$, as desired. Thus, $\solUnsupT(k)$ is a smooth manifold with dimension $\frac{k(k-1)}{2} = \frac{n(n-1)}{2}$ when $n=k$.

Now we consider the $k > n$ case.
For any interpolating $\mG \in \solUnsupT(k)$, we must have that $n$ columns of $\mG$ span the column space of $\mX$. That leaves $k-n$ columns of $\mG$, each in $\bbR^d$, free. 
Suppose that the first $n$ columns of $\mG$ are the ones which span the column space of $\mX$ and the next $k-n$ columns are arbitrary; we will write $\mG = \begin{bmatrix}
\mG_\mX & \mG_a
\end{bmatrix}$. Thus, we have that $\mG \mG^\transp = \mG_\mX \mG_\mX^\transp + \mG_a \mG_a^\transp$, meaning that we only interpolate if $\range(\mG_a) \cap \text{span}(\vx_1, \dots, \vx_n) = \{0\}$. 
Of course, by the arguments made above, we see that $\mG_\mX \mU$ will work instead of $\mG_\mX$ for any $\mU \in O(n)$, this means that for an appropriate selection of  $\mG_a$, we have that $\{\mG \in \bbR^{d\times k}: L^\transp_\UNSUP(\mG, \mX) = 0, \mG = \begin{bmatrix}
\mG_\mX & \mG_a
\end{bmatrix}, \mG_a \text{ fixed}\}$ has dimension $\frac{n(n-1)}{2}$. Since $\mG_a$ can be arbitrary (outside of the column span of $\mX$), it is of dimension $(k-n)(d-n)$. 
To see this, take $k = n + 1$. Then, $\mG_a$ is a single vector which can span a $d-n$ dimensional subspace of $\bbR^d$. If $k = n  + 2$, both vectors can span the $d-n$-dimensional subspace of $\bbR^d$ making them $2(d-n) = (k-n)(d-n)$-dimensional.
Thus, we see that $\{\mG \in \bbR^{d\times k}: L^\transp_\UNSUP(\mG, \mX) = 0, \mG = \begin{bmatrix}
\mG_\mX & \mG_a
\end{bmatrix}\}$ must have dimension $\frac{n(n-1)}{2}(k-n)(d-n)$. This doesn't completely characterize $\solUnsupT(k)$ because we fixed the structure of $\mG$.

Suppose that $\mG \in \solUnsupT(k)$ is an interpolating solution. Then, any $n$ columns of $\mG$ can span the column space of $\mX$ and the remaining $k-n$ columns are free. For each such combination of data-spanning and free columns, we get a smooth manifold of dimension $\frac{n(n-1)}{2}(k-n)(d-n)$. Thus, $\solUnsupT(k)$ is the union of 
$\binom{n}{k}$ 
smooth manifolds of dimension $\frac{n(n-1)}{2}(k-n)(d-n)$. 
One can check that each of these manifolds is disjoint, and hence this union results in another smooth manifold of the same dimension.
For if this is not true, the same $\mG$ is in two combinations of data-spanning and free column configurations. Since the data spanning vectors are non-zero and linearly independent of the free vectors, this is a contradiction.
In summary, $\solUnsupT(k)$ is a smooth manifold of dimension $\frac{n(n-1)}{2} (k-n)(d-n)$.

\end{proof}

\THMsolUnsupPinv*

\begin{proof}[Proof for \cref{thm:solUnsupPinv}]
Suppose that $n>k$ and, for the sake of a contradiction, that $\solUnsupPinv(k) \ne \emptyset$. This means that there exists a $\mG \in \solUnsupPinv(k)$ so that $ \|(\mI_d - \mG \mG^\pinv) \mX \|_F^2 = 0$. Thus, for each column $\vx_i$ of $\mX$ we have that $\|(\mI_d - \mG\mG^\pinv) \vx_i \|_2^2 = 0$. In other words, we know that $\mG \mG^\pinv \vx_i = \vx_i$ implies that $\mG\mG^\pinv$ has at least $n$ eigenvalues of $1$ corresponding to $n$ eigenvectors since the samples are linearly independent.
By the spectral theorem (Theorem 2.5.6 in~\cite{horn2012matrix}), $\mG \mG^\pinv$ is diagonalizable and thus
$\rank(\mG\mG^\pinv) \geq n$. However, we know that $\rank(\mG \mG^\pinv) \leq k$ by simple rank inequalities (0.4.5 (a) in~\cite{horn2012matrix}), a contradiction. Therefore, we know that $\solUnsupPinv(k) = \emptyset$.

Now suppose that $n = k$.
Let $\mG_0 = \mX$. 
Clearly, $\mG_0$ has full column rank and thus has an explicit pseudo-inverse formulation.
Then, we see that
\[
    \mG_0 \mG_0^\pinv \mX 
    =
    \mG_0 (\mG_0^\transp \mG_0)^{-1}\mG_0^\transp \mX 
    =
    \mX (\mX^\transp \mX)^{-1}\mX^\transp \mX 
    =
    \mX
\]
means that $\mG_0 \in \solUnsupPinv(k)$ is one interpolating solution, since $(\mI_d - \mG_0 \mG_0^\pinv)\mX = 0$. We will use $\mG_0$ to generate more solutions. In fact, for each invertible matrix $\mA \in \bbR^{k\times k}$, we see that $\mG_0 \mA \in \solUnsupPinv(k)$ because $(\mG_0 \mA)(\mG_0 \mA)^\pinv = \mG_0 \mG_0^\pinv$. 
We can identify solutions of the form $\mG_0 \mA$ with the real general linear group $\GL(k)$ which is a smooth manifold (in particular, a real Lie group) of dimension $n^2$~\cite{lee2003smooth}.
We define $\solGL(\mG_0) = \{\mG \in \bbR^{d\times k}: \mG = \mG_0 \mA, \mA \in \GL(k) \}$ and maintain that $\solGL(\mG_0) \subset \solUnsupPinv(k)$.

Now, we consider interpolating solutions for when $n=k$ and show that $\solGL(\mG_0) \supset \solUnsupPinv(k)$ so that $\solUnsupPinv(k)$ is a smooth manifold of dimension $n^2$. Let $\mG \in \solUnsupPinv(k)$ be given. 
For any $\vx \in \text{span}\{\vx_1, \dots, \vx_n\}$, it is clear that $\mG \mG^\pinv \vx = \vx$. Moreover, since $\mG \mG^\pinv$ has rank $n=k$, we see that any $\vy$ that is independent of our samples must be in the null space of $\mG \mG^\pinv$ so that $\mG \mG^\pinv \vy = 0$. Thus, $\mG \mG^\pinv$ is an orthogonal projection matrix~\cite{axler1997linear}. Thus, $\range(\mG \mG^\pinv) =\text{span}(\vx_1, \dots, \vx_n)$ and the eigenvalues of $\mG \mG^\pinv$ are all either $0$ or $1$.
Note that since $\mG \mG^\pinv$ and $\mG_0 \mG_0^\pinv$ have rank $n$ and are projection matrices onto the the column space of $\mX$, we have that $\mG \mG^\pinv=\mG_0 \mG_0^\pinv$.
Now we define $\mA = \mG_0^\pinv \mG$ so that
\begin{align*}
    \mG_0 \mA
    &=
    \mG_0 \mG_0^\pinv \mG
    \\
    &=\mG \mG^\pinv \mG 
    \\
    &= \mG
\end{align*}
Note that $\mA \in \GL(k)$ is invertible, otherwise $\mG$ would be rank deficient and not able to span the column space of $\mX$. Thus, any arbitrary interpolating solution $\mG$ can be written as $\mG = \mG_0 \mA$ for $\mA \in \GL(k)$. This means that $\solGL(\mG_0) \supset \solUnsupPinv(k)$ implying that $\solGL(\mG_0) = \solUnsupPinv(k)$, as desired. Thus, $\solUnsupPinv(k)$ is a smooth manifold with dimension $n^2$ when $n=k$.

Now we consider the $k > n$ case.
For any interpolating $\mG \in \solUnsupPinv(k)$, we must have that $n$ columns of $\mG$ span the column space of $\mX$. That leaves $k-n$ columns of $\mG$, each in $\bbR^d$, free to be arbitrary. 
Suppose that the first $n$ columns of $\mG$ are the ones which span the column space of $\mX$ and the next $k-n$ columns are arbitrary; we will write $\mG = \begin{bmatrix}
\mG_\mX & \mG_a
\end{bmatrix}$. 
In this case, $\mG_a$ can be completely arbitrary and even contain columns which span the data. To see this, let $\mG_a$ be arbitrary and note that $\mG\mG^\pinv \mG = \mG$ for any pseudo-inverse (even if we don't have full rank). Thus,
\[
    \mG \mG^\pinv \mG 
    =
    \mG \mG^\pinv \begin{bmatrix}
    \mG_\mX & \mG_a
    \end{bmatrix}
    =
    \begin{bmatrix}
    \mG \mG^\pinv \mG_\mX & \mG \mG^\pinv \mG_a
    \end{bmatrix}
    =
    \mG 
    =
    \begin{bmatrix}
    \mG_\mX & \mG_a
    \end{bmatrix}
\]
implies that $\mG \mG^\pinv \mG_\mX = \mG_\mX$. Since $\mG_\mX$ has the form $\mG_\mX = \mX \mA$ for $\mA \in \GL(k)$, we see that $\mG \mG^\pinv \mX\mA = \mX\mA$. Finally, since $\mA$ is invertible, we have that $\mG \mG^\pinv \mX = \mX$.

Since $\mG_\mX \mA$ will work instead of $\mG_\mX$ for any $\mA \in \GL(n)$, this means that for fixed $\mG_a$, we have that $\{\mG \in \bbR^{d\times k}: \|(\mI_d - \mG \mG^\pinv) \mX \| = 0, \mG = \begin{bmatrix}
\mG_\mX & \mG_a
\end{bmatrix}, \mG_a \text{ fixed}\}$ has dimension $n^2$.
Since $\mG_a$ can be arbitrary and is of dimension $(k-n)d$, we see that $\{\mG \in \bbR^{d\times k}: \|(\mI_d - \mG \mG^\pinv) \mX \| = 0, \mG = \begin{bmatrix}
\mG_\mX & \mG_a
\end{bmatrix}\}$ must have dimension $n^2(k-n)d$. This doesn't completely characterize $\solUnsupPinv(k)$ because we fixed the structure of $\mG$.

Suppose that $\mG \in \solUnsupPinv(k)$ is an interpolating solution. Then, any $n$ columns of $\mG$ can span the column space of $\mX$ and the remaining $k-n$ columns can be arbitrary. For each such combination of data-spanning and arbitrary columns, we get a smooth manifold of dimension $n^2(k-n)d$. Thus, $\solUnsupPinv(k)$ is the union of $\binom{n}{k}$ smooth manifolds of dimension $n^2(k-n)d$. However, $\solUnsupPinv(k)$ need not be a manifold because for one configuration the arbitrary columns of $\mG$ can equal the data-spanning columns of $\mG$ for another configuration. This results in self-intersection. For a concrete example, let $n=1,k=2$. Then, for the two configurations $\begin{bmatrix}
\mG_\vx & \mG_a
\end{bmatrix}\}$ and $\begin{bmatrix}
\mG_a & \mG_\vx
\end{bmatrix}\}$, we can have the solution $\begin{bmatrix}
\mG_\vx & \mG_\vx
\end{bmatrix}\}$.
In summary, $\solUnsupPinv(k)$ is the union of $\binom{n}{k}$ smooth manifolds of dimension $n^2 (k-n)d$.
\end{proof}

\subsection{The pseudo-supervised solutions}

\THMsolPinv*

\begin{proof}[Proof for \cref{thm:solPinv}]
Since $\solT(k) \subset \solUnsupT(k)$, this implies that $\solT(k)= \emptyset$ when $n > k$.

Suppose that $n=k$ and from \cref{thm:solUnsupT} we see that the solutions to $L_\PS^\transp(\mG, \mX, \lambda)$ must have the form of $\mU_k \mW$ for $\mW \in O(k)$. From the condition $\mZ^\transp \mZ = \mX^\transp \mX$, we see that $\mZ = \mA \mD \mV^\transp$ with a unitary matrix $\mA$ and the other matrices from the SVD of $\mX = \mU \begin{bmatrix} \mD \\ \bzero \end{bmatrix}\mV^\transp$; this joint decomposition can be derived from the fact that our condition implies that the singular values and right singular vectors of $\mZ$ and $\mX$ are the same. Thus, we see that
\begin{align*}
    \mG\mZ - \mX
    &=
    \mU_k \mW \mZ - \mU \begin{bmatrix} \mD \\ \bzero \end{bmatrix} \mV^\transp
    \\
    &=
    \mU_k \mW \mA \mD \mV^\transp - \mU \begin{bmatrix} \mD \\ \bzero \end{bmatrix} \mV^\transp 
    & (\mZ^\transp \mZ = \mX^\transp \mX)
    \\
    &=
    \mU_k \mW \mA \mD \mV^\transp - \mU_k \mD \mV^\transp
    \\
    &=
    \mU_k \Big( \mW \mA - \mI_k \Big) \mD \mV^\transp
    \\
    &=
    \bzero
\end{align*}
if and only if $\mW = \mA$, which is permissible since both are unitary. Hence, we have only one solution.

Now suppose that $k > n$. Then, we have that our solution from \cref{thm:solUnsupT} has the form of $\mG = \begin{bmatrix} \mU_n \mW & \mG_a \end{bmatrix}$, where $\mW \in O(n)$ and $\mG_a$ is arbitrary. Then,
\begin{align*}
    \mG\mZ - \mX
    &=
    \begin{bmatrix} \mU_n \mW & \mG_a \end{bmatrix}
    \begin{bmatrix}
    \mA \mD \mV^\transp \\ \bzero
    \end{bmatrix}
    -
    \mU_n \mD \mV^\transp
    \\
    &=\mU_n \mW \mA \mD \mV^\transp - \mU_n \mD \mV^\transp 
    \\
    &= 0
\end{align*}
if and only if $\mW = \mA^\transp$. So we have one solution in this case, but since $\mG_a$ is arbitrary, we have $(k-n)(d-n)$ solutions. Moreover, since the columns of $\mG$ were chosen in this convenient way, we actually have a union of $\binom{n}{k}$ solutions spaces of dimension $(k-n)(d-n)$.

\end{proof}

\THMsolT*

\begin{proof}[Proof for \cref{thm:solT}]
Clearly, $\solPinv \subset \solUnsupT$, implying that $\solPinv  = \emptyset$ when $n > k$.

When $n = k$, we see that the pseudo-supervised term reaches zero  only when we have the unique solution of $\mG = \mX \mZ^{-1}$. Note that for this selction of $\mG$ we have that 
\[
    \mG \mG^\pinv \mX 
    = 
    \mX \mZ^{-1} (\mZ^{-\transp} \mX^\transp   \mX \mZ^{-1})^{-1} \mZ^{-\transp} \mX^\transp \mX
    =
    \mX (\mX^\transp   \mX )^{-1}  \mX^\transp \mX
    =
    \mX,
\]
meaning that $\mG = \mX \mZ^{-1}$ is the only unique solution to the problem when $n=k$.

Now suppose that $k > n$ and let $\mZ = \mU \mS \mV^\transp$ be decomposed via SVD, where $\mS = \begin{bmatrix} \mSig \\ \bzero_{k-n \times n} \end{bmatrix}$ with invertible $\mSig \in \bbR^{n \times n}$. For an arbitrary solution $\mG$ we can write it as $\mG = \begin{bmatrix} \mG' & \mG_a \end{bmatrix} \mU^\transp$ for $\mG' \in \bbR^{d \times n}, \mG_a \in \bbR^{d \times k-n}$. Note that since we are multiplying by $\mU^\transp$, which is invertible, this does not change the solution set we are interested in but merely rotates it. Hence, we have that
\[
    \mX 
    =
    \mG \mZ
    =
    \begin{bmatrix} \mG' & \mG_a \end{bmatrix}
    \mU^\transp
    \mU \mS \mV^\transp 
    =
    \begin{bmatrix} \mG' & \mG_a \end{bmatrix}
    \begin{bmatrix}
    \mSig \\ \bzero
    \end{bmatrix}
    \mV^\transp 
    =
    \mG' \mSig \mV^\transp.
\]
Thus, $\mG' = \mX \mV \mSig^{-1}$ uniquely since both $\mSig$ and $\mV^\transp$ are invertible. On the other hand, $\mG_a$ can be anything. Thus, $\{\mG \in \bbR^{d\times k}: \| \mG \mZ - \mX \|_F^2 = 0\}$ can be identified with the matrices $\mG_a \in \bbR^{d \times k-n}$. Moreover, note that any solution of the form $\mG = \begin{bmatrix} \mX \mV \mSig^{-1} & \mG_a \end{bmatrix} \mU^\transp $ must have that $\mG \mG^\pinv \mX \mV \mSig^{-1} = \mX \mV \mSig^{-1}$ as shown in the proof of \cref{thm:solUnsupPinv}. Thus, $\mG \mG^\pinv \mX = \mX$ meaning that $\mG \in \solPinv(k)$. Thus, $\solPinv(k)$ is an affine space of dimension $d(k-n)$.

\end{proof}

Recall that in the proof of \cref{thm:solUnsupPinv} we have that the part of $\mG$ which contributes has the form $\mX \mA$ for any invertible $\mA$. We see here that the pseudo-supervision forces $\mA$ to be equal to $\mV \mSig^{-1}$ (which are completely constructed from $\mZ$) and removes all the degrees of freedom in $\mA$. We summarize these results in the \cref{tab:number of solution}.

\begin{table}
    \centering
    \begin{tabular}{c|ccc}
        \toprule 
         \multicolumn{1}{c}{}& $\solUnsupT{}$ & $\solUnsupPinv$ & $\solPinv$\\ \midrule
         $n < k$ & 0 & 0 & 0
         \\
         $n=k$ & $\frac{n(n-1)}{2}$ & $n^2$ & $1$
         \\
         $k > n$ & $\frac{n(n-1)}{2}(k-n)(d-n)$ & $n^2(k-n)d$ & $(k-n)d$
         \\ \bottomrule
    \end{tabular}
    \caption{The size of the solution sets}
    \label{tab:number of solution}
\end{table}

\section{Experiments on linear models and gradient details}
\label{app:exp on linear stuff}

\subsection{Details regarding linear experiments}

In the linear setting, we set $\mGam \in \bbR^{d \times m}$ to be the first $m = 10$ columns of a Hadamard matrix multiplied by $\frac{1}{\sqrt{d}}$, where $d = 64$. 
Trails using random orthonormal columns for $\mGam$ yielded extremely similar results, therefore we only show plots for the Hadamard $\mGam$. Then, we create our data by drawing $n = 20$ samples from $\mGam \vz + \veps$, where $\vz \sim \mcN(\bzero, \mI_m)$ and $\veps = \mcN(\bzero, 0.15^2 \mI_d )$. 
Our initial matrix $\mG \in \bbR^{d \times k}$ is drawn from an isotropic Gaussian with $0.03$ standard deviation. We have $k \in \{1, 3, 5, \dots, 127\}$ for the pseudo-supervised experiments and $k \in \{1, 2, \dots, 40\}$ for the supervised experiments. For all these experiments, we have $n_\PS$ and $n_\SUP$ take values in $\{0,2,4,12,18,20\}$.

We perform gradient descent with a maximum of $500$ iterations. The initial step size is $0.0001$ after which we adaptively pick the current iteration's step size which will reduce the training loss most. We do this by multiplying the current step size by values in $\{0.0000001, 0.000005, 0.000001, 0.00001, 0.0001, 0.001, 0.01, 0.1, 1, 10, 100\}$ and picking the value which will yield the lowest training loss. If the matrix $\mG$ does not change more than $0.00001$ in Frobenius norm for more than 5 iterations, then the optimization also stops. If the Frobenius norm of the gradient is less than $0.05$, then the optimization stops. The gradients are calculated in \cref{app:gradient calculations}.

We run all these experiments $200$ times and average the results. For each experiment, we pick a new seed and re-run the same script. Therefore, the pseudo-supervised examples are fixed for each experiment as we vary $k$ and $n_\PS$. Hence, the errorbars in \cref{fig:DDsup_errorbars,fig:psResults_errorbars1,fig:psResults_errorbars2,fig:psResults_errorbars3,fig:psResults_errorbars4} show one standard deviation of how the choice of matrix initialization, pseudo-supervision samples, and data samples all affect the test error.

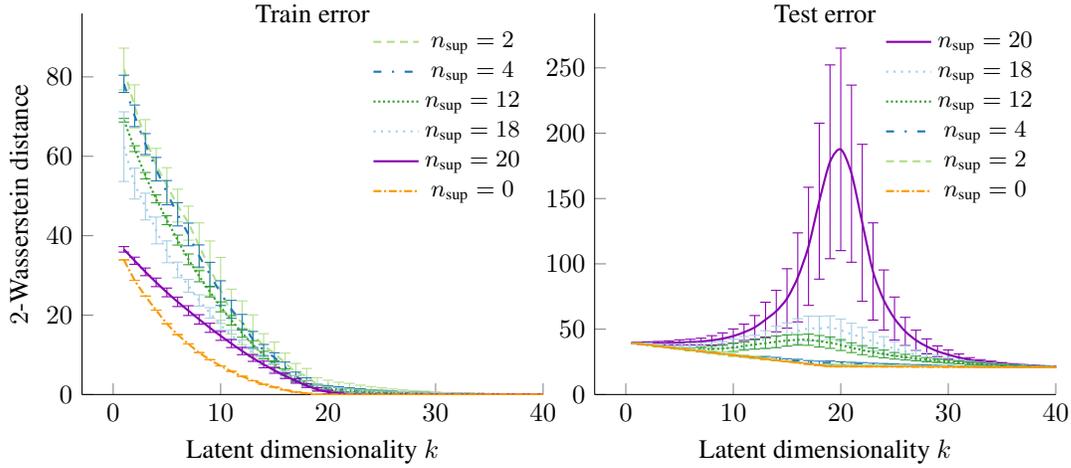
\begin{figure}
    \centering
    
    \newcommand\figWidth{.48}
    \pgfplotsset{
        customAxis/.style={
            ylabel={$2$-Wasserstein distance},
            xlabel={\LATENTSPACE},
            axis x line*=bottom,
            axis y line*=left,
            xmax=40,
            ymin=0,
            width=1.15\textwidth,
            every axis title/.style={at={(0.5,1)}},
          },
          cycle list name=colorList6order2,
        }
                
    \hspace{-2cm}
    \begin{subfigure}[t]{\figWidth\textwidth}  
        \begin{tikzpicture}
            \begin{axis}[customAxis, title={Train error}, legend style={font=\small, draw=none}, legend entries={$n_\SUP =2$, $n_\SUP =4$, $n_\SUP =12$, $n_\SUP =18$, $n_\SUP =20$, $n_\SUP =0$}]
                \addplot+[error bars/.cd, y fixed, y dir=both, y explicit, error bar style={solid}]
                table [x=m, y=n2, y error=n2_std, col sep=comma] {csv/supervised_train.csv};
                \addplot+[error bars/.cd, y fixed, y dir=both, y explicit, error bar style={solid}]
                table [x=m, y=n4, y error=n4_std, col sep=comma] {csv/supervised_train.csv};
                \addplot+[error bars/.cd, y fixed, y dir=both, y explicit, error bar style={solid}] table [x=m, y=n12, y error=n12_std, col sep=comma] {csv/supervised_train.csv};
                \addplot+[error bars/.cd, y fixed, y dir=both, y explicit, error bar style={solid}]
                table [x=m, y=n18, y error=n18_std, col sep=comma] {csv/supervised_train.csv};
                \addplot+[error bars/.cd, y fixed, y dir=both, y explicit, error bar style={solid}]
                table [x=m, y=n20, y error=n20_std, col sep=comma] {csv/supervised_train.csv};
                \addplot+[error bars/.cd, y fixed, y dir=both, y explicit, error bar style={solid}]
                table [x=m, y=n0, y error=n0_std, col sep=comma] {csv/supervised_train.csv};
            \end{axis}
        \end{tikzpicture}
    \end{subfigure}
    \pgfplotsset{cycle list name=colorList6order1}
    \begin{subfigure}[t]{\figWidth\textwidth}  
        \begin{tikzpicture}
            \begin{axis}[customAxis, title={Test error}, legend style={font=\small, draw=none}, legend entries={$n_\SUP =20$, $n_\SUP =18$, $n_\SUP =12$, $n_\SUP =4$, $n_\SUP =2$, $n_\SUP =0$}, ylabel={\ }]
                \addplot+[error bars/.cd, y fixed, y dir=both, y explicit, error bar style={solid}]
                table [x=m, y=n20, y error=n20_std, col sep=comma] {csv/supervised_test.csv};
                \addplot+[error bars/.cd, y fixed, y dir=both, y explicit, error bar style={solid}]
                table [x=m, y=n18, y error=n18_std, col sep=comma] {csv/supervised_test.csv};
                \addplot+[error bars/.cd, y fixed, y dir=both, y explicit, error bar style={solid}] table [x=m, y=n12, y error=n12_std, col sep=comma] {csv/supervised_test.csv};
                \addplot+[error bars/.cd, y fixed, y dir=both, y explicit, error bar style={solid}]
                table [x=m, y=n4, y error=n4_std, col sep=comma] {csv/supervised_test.csv};
                \addplot+[error bars/.cd, y fixed, y dir=both, y explicit, error bar style={solid}]
                table [x=m, y=n2, y error=n2_std, col sep=comma] {csv/supervised_test.csv};
                \addplot+[error bars/.cd, y fixed, y dir=both, y explicit, error bar style={solid}]
                table [x=m, y=n0, y error=n0_std, col sep=comma] {csv/supervised_test.csv};
            \end{axis}
        \end{tikzpicture}
    \end{subfigure}
    
    \caption{In this figure, we minimize the loss in \cref{eq:training loss - supervised}. 
    The legends are displayed in the same order as the curves appear on the plot for clarity.
    This figure is a more detailed version of  \cref{fig:DDsup}.
    }
    \label{fig:DDsup_errorbars}
\end{figure}

\def\hoffsetfigOne{-1cm}
\def\hoffsetfigTwo{0.5cm}
\def\legendLocation{(0.9,1.3)}
\begin{figure*}
    \centering
    
    \newcommand\figWidth{0.33}
    \newcommand\cmWidth{1.15\textwidth}
    
    \pgfplotsset{
        customAxis/.style={
            ylabel={},
            xlabel={\LATENTSPACE},
            axis x line*=bottom,
            axis y line*=left,
            ymin=0,
            ymax =20,
            xmin=0,
            xmax=127,
            width=\cmWidth,
            legend style={at=\legendLocation, font=\small, draw=none, fill opacity=0.0, text opacity=1},
          },
          cycle list name=colorList6order1,
        }

    \hspace{\hoffsetfigOne}
    \begin{subfigure}[t]{\figWidth\textwidth}  
        \begin{tikzpicture}
            \begin{axis}[customAxis, 
            ylabel={Test error}, 
            legend entries={$n_\PS = 20$, $n_\PS = 18$}]
                \addplot+[error bars/.cd, y fixed, y dir=both, y explicit, error bar style={solid}]
                table [x=m, y=n20, y error=n20_std, col sep=comma] {csv/grad1_test.csv};
                \addplot+[error bars/.cd, y fixed, y dir=both, y explicit, error bar style={solid}]
                table [x=m, y=n18, y error=n18_std, col sep=comma] {csv/grad1_test.csv};
            \end{axis}
        \end{tikzpicture}
    \end{subfigure}
    \hspace{\hoffsetfigTwo}
    \begin{subfigure}[t]{\figWidth\textwidth}  
        \begin{tikzpicture}
            \pgfplotsset{cycle list shift=+2}
            \begin{axis}[customAxis, 
            legend entries={$n_\PS = 12$, $n_\PS = 4$}]
                \addplot+[error bars/.cd, y fixed, y dir=both, y explicit, error bar style={solid}]
                table [x=m, y=n12, y error=n12_std, col sep=comma] {csv/grad1_test.csv};
                \addplot+[error bars/.cd, y fixed, y dir=both, y explicit, error bar style={solid}]
                table [x=m, y=n4, y error=n4_std, col sep=comma] {csv/grad1_test.csv};
            \end{axis}
        \end{tikzpicture}
    \end{subfigure}
    \begin{subfigure}[t]{\figWidth\textwidth}  
        \begin{tikzpicture}
            \pgfplotsset{cycle list shift=+4}
            \begin{axis}[customAxis, legend entries={$n_\PS = 2$, $n_\PS = 0$}]
                \addplot+[error bars/.cd, y fixed, y dir=both, y explicit, error bar style={solid}]
                table [x=m, y=n2, y error=n2_std, col sep=comma] {csv/grad1_test.csv};
                \addplot+[error bars/.cd, y fixed, y dir=both, y explicit, error bar style={solid}]
                table [x=m, y=n0, y error=n0_std, col sep=comma] {csv/grad1_test.csv};
            \end{axis}
        \end{tikzpicture}
    \end{subfigure}

    \pgfplotsset{
        customAxis/.style={
            ylabel={},
            xlabel={\LATENTSPACE},
            axis x line*=bottom,
            axis y line*=left,
            ymin=0,
            ymax =750,
            xmin=0,
            xmax=127,
            width=\cmWidth,
            legend style={at=\legendLocation, font=\small, draw=none, fill opacity=0.0, text opacity=1},
          },
          cycle list name=colorList6order1,
        }
        
    \hspace{\hoffsetfigOne}
    \begin{subfigure}[t]{\figWidth\textwidth}  
        \begin{tikzpicture}
            \begin{axis}[customAxis, 
            ylabel={\# Iterations to convergence}, legend entries={$n_\PS = 20$, $n_\PS = 18$}]
                \addplot+[error bars/.cd, y fixed, y dir=both, y explicit, error bar style={solid}]
                table [x=m, y=n20, y error=n20_std, col sep=comma] {csv/grad1_iterations.csv};
                \addplot+[error bars/.cd, y fixed, y dir=both, y explicit, error bar style={solid}]
                table [x=m, y=n18, y error=n18_std, col sep=comma] {csv/grad1_iterations.csv};
            \end{axis}
        \end{tikzpicture}
    \end{subfigure}
    \hspace{\hoffsetfigTwo}
    \begin{subfigure}[t]{\figWidth\textwidth}  
        \begin{tikzpicture}
            \pgfplotsset{cycle list shift=+2}
            \begin{axis}[customAxis, legend entries={$n_\PS = 12$, $n_\PS = 4$}]
                \addplot+[error bars/.cd, y fixed, y dir=both, y explicit, error bar style={solid}]
                table [x=m, y=n12, y error=n12_std, col sep=comma] {csv/grad1_iterations.csv};
                \addplot+[error bars/.cd, y fixed, y dir=both, y explicit, error bar style={solid}]
                table [x=m, y=n4, y error=n4_std, col sep=comma] {csv/grad1_iterations.csv};
            \end{axis}
        \end{tikzpicture}
    \end{subfigure}
    \begin{subfigure}[t]{\figWidth\textwidth}  
        \begin{tikzpicture}
            \pgfplotsset{cycle list shift=+4}
            \begin{axis}[customAxis, legend entries={$n_\PS = 2$, $n_\PS = 0$}]
                \addplot+[error bars/.cd, y fixed, y dir=both, y explicit, error bar style={solid}]
                table [x=m, y=n2, y error=n2_std, col sep=comma] {csv/grad1_iterations.csv};
                \addplot+[error bars/.cd, y fixed, y dir=both, y explicit, error bar style={solid}]
                table [x=m, y=n0, y error=n0_std, col sep=comma] {csv/grad1_iterations.csv};
            \end{axis}
        \end{tikzpicture}
    \end{subfigure}
    \caption{In this figure, we minimize the loss in \cref{eq:training loss - gradient type 1}. This figure is a more detailed version of the first column of \cref{fig:psResults}. We show results for six different pseudo-supervision levels (i.e., $n_\PS$ values). For visual clarity, each subfigure includes results for only two $n_\PS$ values. }
    \label{fig:psResults_errorbars1}
\end{figure*}

\begin{figure*}
    \centering
    
    \newcommand\figWidth{0.33}
    \newcommand\cmWidth{1.15\textwidth}
    
    \pgfplotsset{
        customAxis/.style={
            ylabel={},
            xlabel={\LATENTSPACE},
            axis x line*=bottom,
            axis y line*=left,
            ymin=0,
            ymax =20,
            xmin=0,
            xmax=127,
            width=\cmWidth,
            legend style={at=\legendLocation, font=\small, draw=none, fill opacity=0.0, text opacity=1},
          },
          cycle list name=colorList6order1,
        }

    \hspace{\hoffsetfigOne}
    \begin{subfigure}[t]{\figWidth\textwidth}  
        \begin{tikzpicture}
            \begin{axis}[customAxis, 
            ylabel={Test error}, 
            legend entries={$n_\PS = 20$, $n_\PS = 18$}]
                \addplot+[error bars/.cd, y fixed, y dir=both, y explicit, error bar style={solid}]
                table [x=m, y=n20, y error=n20_std, col sep=comma] {csv/grad2_test.csv};
                \addplot+[error bars/.cd, y fixed, y dir=both, y explicit, error bar style={solid}]
                table [x=m, y=n18, y error=n18_std, col sep=comma] {csv/grad2_test.csv};
            \end{axis}
        \end{tikzpicture}
    \end{subfigure}
    \hspace{\hoffsetfigTwo}
    \begin{subfigure}[t]{\figWidth\textwidth}  
        \begin{tikzpicture}
            \pgfplotsset{cycle list shift=+2}
            \begin{axis}[customAxis, 
            legend entries={$n_\PS = 12$, $n_\PS = 4$}]
                \addplot+[error bars/.cd, y fixed, y dir=both, y explicit, error bar style={solid}]
                table [x=m, y=n12, y error=n12_std, col sep=comma] {csv/grad1_test.csv};
                \addplot+[error bars/.cd, y fixed, y dir=both, y explicit, error bar style={solid}]
                table [x=m, y=n4, y error=n4_std, col sep=comma] {csv/grad2_test.csv};
            \end{axis}
        \end{tikzpicture}
    \end{subfigure}
    \begin{subfigure}[t]{\figWidth\textwidth}  
        \begin{tikzpicture}
            \pgfplotsset{cycle list shift=+4}
            \begin{axis}[customAxis, legend entries={$n_\PS = 2$, $n_\PS = 0$}]
                \addplot+[error bars/.cd, y fixed, y dir=both, y explicit, error bar style={solid}]
                table [x=m, y=n2, y error=n2_std, col sep=comma] {csv/grad2_test.csv};
                \addplot+[error bars/.cd, y fixed, y dir=both, y explicit, error bar style={solid}]
                table [x=m, y=n0, y error=n0_std, col sep=comma] {csv/grad2_test.csv};
            \end{axis}
        \end{tikzpicture}
    \end{subfigure}

    \pgfplotsset{
        customAxis/.style={
            ylabel={},
            xlabel={\LATENTSPACE},
            axis x line*=bottom,
            axis y line*=left,
            ymin=0,
            ymax =750,
            xmin=0,
            xmax=127,
            width=\cmWidth,
            legend style={at=\legendLocation, font=\small, draw=none, fill opacity=0.0, text opacity=1},
          },
          cycle list name=colorList6order1,
        }
        
    \hspace{\hoffsetfigOne}
    \begin{subfigure}[t]{\figWidth\textwidth}  
        \begin{tikzpicture}
            \begin{axis}[customAxis, 
            ylabel={\# Iterations to convergence}, legend entries={$n_\PS = 20$, $n_\PS = 18$}]
                \addplot+[error bars/.cd, y fixed, y dir=both, y explicit, error bar style={solid}]
                table [x=m, y=n20, y error=n20_std, col sep=comma] {csv/grad2_iterations.csv};
                \addplot+[error bars/.cd, y fixed, y dir=both, y explicit, error bar style={solid}]
                table [x=m, y=n18, y error=n18_std, col sep=comma] {csv/grad2_iterations.csv};
            \end{axis}
        \end{tikzpicture}
    \end{subfigure}
    \hspace{\hoffsetfigTwo}
    \begin{subfigure}[t]{\figWidth\textwidth}  
        \begin{tikzpicture}
            \pgfplotsset{cycle list shift=+2}
            \begin{axis}[customAxis, legend entries={$n_\PS = 12$, $n_\PS = 4$}]
                \addplot+[error bars/.cd, y fixed, y dir=both, y explicit, error bar style={solid}]
                table [x=m, y=n12, y error=n12_std, col sep=comma] {csv/grad2_iterations.csv};
                \addplot+[error bars/.cd, y fixed, y dir=both, y explicit, error bar style={solid}]
                table [x=m, y=n4, y error=n4_std, col sep=comma] {csv/grad2_iterations.csv};
            \end{axis}
        \end{tikzpicture}
    \end{subfigure}
    \begin{subfigure}[t]{\figWidth\textwidth}  
        \begin{tikzpicture}
            \pgfplotsset{cycle list shift=+4}
            \begin{axis}[customAxis, legend entries={$n_\PS = 2$, $n_\PS = 0$}]
                \addplot+[error bars/.cd, y fixed, y dir=both, y explicit, error bar style={solid}]
                table [x=m, y=n2, y error=n2_std, col sep=comma] {csv/grad2_iterations.csv};
                \addplot+[error bars/.cd, y fixed, y dir=both, y explicit, error bar style={solid}]
                table [x=m, y=n0, y error=n0_std, col sep=comma] {csv/grad2_iterations.csv};
            \end{axis}
        \end{tikzpicture}
    \end{subfigure}
    \caption{
    In this figure, we minimize the loss in \cref{eq:training loss - gradient type 2}. This figure is a more detailed version of the center column of \cref{fig:psResults}. We show results for six different pseudo-supervision levels (i.e., $n_\PS$ values). For visual clarity, each subfigure includes results for only two $n_\PS$ values.}
    \label{fig:psResults_errorbars2}
\end{figure*}

\begin{figure*}
    \centering
    
    \newcommand\figWidth{0.33}
    \newcommand\cmWidth{1.15\textwidth}
    
    \pgfplotsset{
        customAxis/.style={
            ylabel={},
            xlabel={\LATENTSPACE},
            axis x line*=bottom,
            axis y line*=left,
            ymin=0,
            ymax =20,
            xmin=0,
            xmax=127,
            width=\cmWidth,
            legend style={at=\legendLocation, font=\small, draw=none, fill opacity=0.0, text opacity=1},
          },
          cycle list name=colorList6order1,
        }

    \hspace{\hoffsetfigOne}
    \begin{subfigure}[t]{\figWidth\textwidth}  
        \begin{tikzpicture}
            \begin{axis}[customAxis, 
            ylabel={Test error}, 
            legend entries={$n_\PS = 20$, $n_\PS = 18$}]
                \addplot+[error bars/.cd, y fixed, y dir=both, y explicit, error bar style={solid}]
                table [x=m, y=n20, y error=n20_std, col sep=comma] {csv/grad4_test.csv};
                \addplot+[error bars/.cd, y fixed, y dir=both, y explicit, error bar style={solid}]
                table [x=m, y=n18, y error=n18_std, col sep=comma] {csv/grad4_test.csv};
            \end{axis}
        \end{tikzpicture}
    \end{subfigure}
    \hspace{\hoffsetfigTwo}
    \begin{subfigure}[t]{\figWidth\textwidth}  
        \begin{tikzpicture}
            \pgfplotsset{cycle list shift=+2}
            \begin{axis}[customAxis, 
            legend entries={$n_\PS = 12$, $n_\PS = 4$}]
                \addplot+[error bars/.cd, y fixed, y dir=both, y explicit, error bar style={solid}]
                table [x=m, y=n12, y error=n12_std, col sep=comma] {csv/grad4_test.csv};
                \addplot+[error bars/.cd, y fixed, y dir=both, y explicit, error bar style={solid}]
                table [x=m, y=n4, y error=n4_std, col sep=comma] {csv/grad4_test.csv};
            \end{axis}
        \end{tikzpicture}
    \end{subfigure}
    \begin{subfigure}[t]{\figWidth\textwidth}  
        \begin{tikzpicture}
            \pgfplotsset{cycle list shift=+4}
            \begin{axis}[customAxis, legend entries={$n_\PS = 2$, $n_\PS = 0$}]
                \addplot+[error bars/.cd, y fixed, y dir=both, y explicit, error bar style={solid}]
                table [x=m, y=n2, y error=n2_std, col sep=comma] {csv/grad4_test.csv};
                \addplot+[error bars/.cd, y fixed, y dir=both, y explicit, error bar style={solid}]
                table [x=m, y=n0, y error=n0_std, col sep=comma] {csv/grad4_test.csv};
            \end{axis}
        \end{tikzpicture}
    \end{subfigure}

    \pgfplotsset{
        customAxis/.style={
            ylabel={},
            xlabel={\LATENTSPACE},
            axis x line*=bottom,
            axis y line*=left,
            ymin=0,
            ymax =750,
            xmin=0,
            xmax=127,
            width=\cmWidth,
            legend style={at=\legendLocation, font=\small, draw=none, fill opacity=0.0, text opacity=1},
          },
          cycle list name=colorList6order1,
        }
        
    \hspace{\hoffsetfigOne}
    \begin{subfigure}[t]{\figWidth\textwidth}  
        \begin{tikzpicture}
            \begin{axis}[customAxis, 
            ylabel={\# Iterations to convergence}, legend entries={$n_\PS = 20$, $n_\PS = 18$}]
                \addplot+[error bars/.cd, y fixed, y dir=both, y explicit, error bar style={solid}]
                table [x=m, y=n20, y error=n20_std, col sep=comma] {csv/grad4_iterations.csv};
                \addplot+[error bars/.cd, y fixed, y dir=both, y explicit, error bar style={solid}]
                table [x=m, y=n18, y error=n18_std, col sep=comma] {csv/grad4_iterations.csv};
            \end{axis}
        \end{tikzpicture}
    \end{subfigure}
    \hspace{\hoffsetfigTwo}
    \begin{subfigure}[t]{\figWidth\textwidth}  
        \begin{tikzpicture}
            \pgfplotsset{cycle list shift=+2}
            \begin{axis}[customAxis, legend entries={$n_\PS = 12$, $n_\PS = 4$}]
                \addplot+[error bars/.cd, y fixed, y dir=both, y explicit, error bar style={solid}]
                table [x=m, y=n12, y error=n12_std, col sep=comma] {csv/grad4_iterations.csv};
                \addplot+[error bars/.cd, y fixed, y dir=both, y explicit, error bar style={solid}]
                table [x=m, y=n4, y error=n4_std, col sep=comma] {csv/grad4_iterations.csv};
            \end{axis}
        \end{tikzpicture}
    \end{subfigure}
    \begin{subfigure}[t]{\figWidth\textwidth}  
        \begin{tikzpicture}
            \pgfplotsset{cycle list shift=+4}
            \begin{axis}[customAxis, legend entries={$n_\PS = 2$, $n_\PS = 0$}]
                \addplot+[error bars/.cd, y fixed, y dir=both, y explicit, error bar style={solid}]
                table [x=m, y=n2, y error=n2_std, col sep=comma] {csv/grad4_iterations.csv};
                \addplot+[error bars/.cd, y fixed, y dir=both, y explicit, error bar style={solid}]
                table [x=m, y=n0, y error=n0_std, col sep=comma] {csv/grad4_iterations.csv};
            \end{axis}
        \end{tikzpicture}
    \end{subfigure}
    \caption{
    In this figure, we minimize the loss in \cref{eq:training loss - gradient type 4}. 
    This figure shows that you can achieve good performance and double descent behavior if you weigh the pseudo-supervised and unsupervised terms in the loss disproportionately. Note that in the $n_\PS = 0$ case, we are effectively reducing the step size by making $\alpha$ large. In these experiments, we picked $\alpha = 0.98$.
    We show results for six different pseudo-supervision levels (i.e., $n_\PS$ values). For visual clarity, each subfigure includes results for only two $n_\PS$ values.}
    \label{fig:psResults_errorbars4}
\end{figure*}

\fi 
\if\showappB1

\begin{figure*}
    \centering
    
    \newcommand\figWidth{0.33}
    \newcommand\cmWidth{1.15\textwidth}
    
    \pgfplotsset{
        customAxis/.style={
            ylabel={},
            xlabel={\LATENTSPACE},
            axis x line*=bottom,
            axis y line*=left,
            ymin=0,
            ymax =20,
            xmin=0,
            xmax=127,
            width=\cmWidth,
            legend style={at=\legendLocation, font=\small, draw=none, fill opacity=0.0, text opacity=1},
          },
          cycle list name=colorList6order1,
        }

    \hspace{\hoffsetfigOne}
    \begin{subfigure}[t]{\figWidth\textwidth}  
        \begin{tikzpicture}
            \begin{axis}[customAxis, 
            ylabel={Test error}, 
            legend entries={$n_\PS = 20$, $n_\PS = 18$}]
                \addplot+[error bars/.cd, y fixed, y dir=both, y explicit, error bar style={solid}]
                table [x=m, y=n20, y error=n20_std, col sep=comma] {csv/grad3_test.csv};
                \addplot+[error bars/.cd, y fixed, y dir=both, y explicit, error bar style={solid}]
                table [x=m, y=n18, y error=n18_std, col sep=comma] {csv/grad3_test.csv};
            \end{axis}
        \end{tikzpicture}
    \end{subfigure}
    \hspace{\hoffsetfigTwo}
    \begin{subfigure}[t]{\figWidth\textwidth}  
        \begin{tikzpicture}
            \pgfplotsset{cycle list shift=+2}
            \begin{axis}[customAxis, 
            legend entries={$n_\PS = 12$, $n_\PS = 4$}]
                \addplot+[error bars/.cd, y fixed, y dir=both, y explicit, error bar style={solid}]
                table [x=m, y=n12, y error=n12_std, col sep=comma] {csv/grad3_test.csv};
                \addplot+[error bars/.cd, y fixed, y dir=both, y explicit, error bar style={solid}]
                table [x=m, y=n4, y error=n4_std, col sep=comma] {csv/grad3_test.csv};
            \end{axis}
        \end{tikzpicture}
    \end{subfigure}
    \begin{subfigure}[t]{\figWidth\textwidth}  
        \begin{tikzpicture}
            \pgfplotsset{cycle list shift=+4}
            \begin{axis}[customAxis, legend entries={$n_\PS = 2$, $n_\PS = 0$}]
                \addplot+[error bars/.cd, y fixed, y dir=both, y explicit, error bar style={solid}]
                table [x=m, y=n2, y error=n2_std, col sep=comma] {csv/grad3_test.csv};
                \addplot+[error bars/.cd, y fixed, y dir=both, y explicit, error bar style={solid}]
                table [x=m, y=n0, y error=n0_std, col sep=comma] {csv/grad3_test.csv};
            \end{axis}
        \end{tikzpicture}
    \end{subfigure}

    \pgfplotsset{
        customAxis/.style={
            ylabel={},
            xlabel={\LATENTSPACE},
            axis x line*=bottom,
            axis y line*=left,
            ymin=0,
            ymax =750,
            xmin=0,
            xmax=127,
            width=\cmWidth,
            legend style={at=\legendLocation, font=\small, draw=none, fill opacity=0.0, text opacity=1},
          },
          cycle list name=colorList6order1,
        }
        
    \hspace{\hoffsetfigOne}
    \begin{subfigure}[t]{\figWidth\textwidth}  
        \begin{tikzpicture}
            \begin{axis}[customAxis, 
            ylabel={\# Iterations to convergence}, legend entries={$n_\PS = 20$, $n_\PS = 18$}]
                \addplot+[error bars/.cd, y fixed, y dir=both, y explicit, error bar style={solid}]
                table [x=m, y=n20, y error=n20_std, col sep=comma] {csv/grad2_iterations.csv};
                \addplot+[error bars/.cd, y fixed, y dir=both, y explicit, error bar style={solid}]
                table [x=m, y=n18, y error=n18_std, col sep=comma] {csv/grad3_iterations.csv};
            \end{axis}
        \end{tikzpicture}
    \end{subfigure}
    \hspace{\hoffsetfigTwo}
    \begin{subfigure}[t]{\figWidth\textwidth}  
        \begin{tikzpicture}
            \pgfplotsset{cycle list shift=+2}
            \begin{axis}[customAxis, legend entries={$n_\PS = 12$, $n_\PS = 4$}]
                \addplot+[error bars/.cd, y fixed, y dir=both, y explicit, error bar style={solid}]
                table [x=m, y=n12, y error=n12_std, col sep=comma] {csv/grad2_iterations.csv};
                \addplot+[error bars/.cd, y fixed, y dir=both, y explicit, error bar style={solid}]
                table [x=m, y=n4, y error=n4_std, col sep=comma] {csv/grad3_iterations.csv};
            \end{axis}
        \end{tikzpicture}
    \end{subfigure}
    \begin{subfigure}[t]{\figWidth\textwidth}  
        \begin{tikzpicture}
            \pgfplotsset{cycle list shift=+4}
            \begin{axis}[customAxis, legend entries={$n_\PS = 2$, $n_\PS = 0$}]
                \addplot+[error bars/.cd, y fixed, y dir=both, y explicit, error bar style={solid}]
                table [x=m, y=n2, y error=n2_std, col sep=comma] {csv/grad2_iterations.csv};
                \addplot+[error bars/.cd, y fixed, y dir=both, y explicit, error bar style={solid}]
                table [x=m, y=n0, y error=n0_std, col sep=comma] {csv/grad3_iterations.csv};
            \end{axis}
        \end{tikzpicture}
    \end{subfigure}
    \caption{
    In this figure, we minimize the loss in \cref{eq:training loss - gradient type 3}. This figure is a more detailed version of the right column of \cref{fig:psResults}. We show results for six different pseudo-supervision levels (i.e., $n_\PS$ values). For visual clarity, each subfigure includes results for only two $n_\PS$ values.}
    \label{fig:psResults_errorbars3}
\end{figure*}

\subsection{Gradient calculations}
\label{app:gradient calculations}
The losses introduced in \cref{eq:training loss - gradient type 1,eq:training loss - gradient type 2,eq:training loss - gradient type 4} all have similar forms, so we only show what is the gradient for \cref{eq:training loss - gradient type 1} and the other ones are easily obtained. For completeness, we restate the loss:
\[
    \mcL^\TRAIN(\mG, \mcD)
    =
    \frac{1}{n_\PS}\|\mG \mZ^\PS_\mcS - \mX^\PS \|_F^2 
    + 
    \frac{1}{n_\UNSUP} \|(\mI_d - \mG \mG^\transp)\mX^\UNSUP\|_F^2.
\]
The gradient of the first term is
\begin{align*}
    \nabla_\mG \frac{1}{n_\PS}\|\mG \mZ^\PS_\mcS -  \mX^\PS \|_F^2 
    &=
    \frac{1}{n_\PS}
    \nabla_\mG \|(\mZ^\PS_\mcS)^\transp \mG^\transp -  (\mX^\PS)^\transp \|_F^2
    &\text{(Frobenius transpose invariance)}
    \\
    &=
    \frac{1}{n_\PS}
    \bigg(
    \nabla_{\mG^\transp} \|(\mZ^\PS_\mcS)^\transp \mG^\transp -  (\mX^\PS)^\transp \|_F^2
    \bigg)^\transp 
    &\text{(Section 4.2.3 of~\citep{gentle2007matrix})}
    \\
    &=
    \frac{1}{n_\PS}
    \bigg(2\mZ^\PS_\mcS \big((\mZ^\PS_\mcS)^\transp \mG^\transp -  (\mX^\PS)^\transp\big) \bigg)^\transp
    \\
    &=
    \frac{2}{n_\PS}(\mG \mZ^\PS_\mcS -  \mX^\PS)(\mZ^\PS_\mcS)^\transp.
\end{align*}

The gradient of the second term in the considered loss function is a bit more tricky. We simplify it first to get
\begin{align*}
    \|(\mI_p - \mG \mG^\transp)\mX_\mcS^\UNSUP\|_F^2
    &=
    \trace( (\mX_\mcS^\UNSUP)^\transp (\mI_p - \mG \mG^\transp)(\mI_p - \mG \mG^\transp)\mX_\mcS^\UNSUP )
    \\
    &=
    \trace( (\mX_\mcS^\UNSUP)^\transp (\mI_p - 2\mG \mG^\transp + \mG \mG^\transp \mG \mG^\transp)\mX_\mcS^\UNSUP )
    \\
    &=
     \|\mX_\mcS^\UNSUP\|_F^2  - 2\trace((\mX_\mcS^\UNSUP)^\transp\mG \mG^\transp\mX_\mcS^\UNSUP) + \trace((\mX_\mcS^\UNSUP)^\transp\mG \mG^\transp\mG \mG^\transp\mX_\mcS^\UNSUP)
\end{align*}
which we separate into three terms:
\begin{align*}
    f_1(\mG)
    &=
    \| \mX^\UNSUP_\mcS \|_F^2
    \\
    f_2(\mG)
    &=
    -2 \trace((\mX_\mcS^\UNSUP)^\transp\mG \mG^\transp\mX_\mcS^\UNSUP)
    \\
    f_3(\mG) &= \trace((\mX_\mcS^\UNSUP)^\transp\mG \mG^\transp\mG \mG^\transp\mX_\mcS^\UNSUP).
\end{align*}
Clearly, we have that $\nabla_\mG \|(\mI_p - \mG \mG^\transp)\mX_\mcS^\UNSUP\|_F^2 = \nabla_\mG f_1 + \nabla_\mG f_2 + \nabla_\mG f_3$ and that $\nabla_\mG f_1 = 0$. By using some matrix identities, we get that
\begin{align*}
    \nabla_\mG f_2
    &=
    -2 \nabla_\mG \trace((\mX_\mcS^\UNSUP)^\transp\mG \mG^\transp\mX_\mcS^\UNSUP)
    \\
    &= -4 \mX_\mcS^\UNSUP(\mX_\mcS^\UNSUP)^\transp\mG
    &\text{((119) from~\citep{matrix_cookbook})}
\end{align*} 
and
\begin{align*}
    \nabla_\mG f_3
    &=
    \nabla_\mG \trace((\mX_\mcS^\UNSUP)^\transp\mG \mG^\transp\mG \mG^\transp\mX_\mcS^\UNSUP)
    \\
    &=
    \left(
    \nabla_{\mG^\transp} \trace((\mX_\mcS^\UNSUP)^\transp\mG \mG^\transp\mG \mG^\transp\mX_\mcS^\UNSUP)
    \right)^\transp 
    &\text{(Section 4.2.3 of~\citep{gentle2007matrix})}
    \\
    &=
    \left(
    2\mG^\transp \mG \mG^\transp \mX_\mcS^\UNSUP(\mX_\mcS^\UNSUP)^\transp
    +
    2\mG^\transp \mX_\mcS^\UNSUP (\mX_\mcS^\UNSUP)^\transp \mG \mG^\transp 
    \right)^\transp 
    &\text{((123) of~\citep{matrix_cookbook})}
    \\
    &=
    2 \mX_\mcS^\UNSUP (\mX_\mcS^\UNSUP)^\transp
    \mG \mG^\transp \mG
    +
    2 \mG \mG^\transp 
    \mX_\mcS^\UNSUP (\mX_\mcS^\UNSUP)^\transp
    \mG 
\end{align*}

Hence, the gradient of the second term in the considered loss function becomes
\begin{align*}
    \nabla_\mG \|(\mI_p - \mG \mG^\transp)\mX_\mcS^\UNSUP\|_F^2
    &=
    \nabla_\mG f_1 + \nabla_\mG f_2 + \nabla_\mG f_3
    \\
    &=
    -4 \mX_\mcS^\UNSUP(\mX_\mcS^\UNSUP)^\transp\mG
    + 
    2 \mX_\mcS^\UNSUP (\mX_\mcS^\UNSUP)^\transp
    \mG \mG^\transp \mG
    \\
    &\quad \quad 
    +
    2 \mG \mG^\transp 
    \mX_\mcS^\UNSUP (\mX_\mcS^\UNSUP)^\transp
    \mG 
\end{align*}

Thus, the total gradient for \cref{eq:training loss - gradient type 1} becomes
\begin{align*}
    \nabla_\mG \left( \frac{1}{n_\PS}\|\mG \mZ^\PS_\mcS - \mX^\PS \|_F^2 
    + 
    \frac{1}{n_\UNSUP}
    \|(\mI_d - \mG \mG^\transp)\mX^\UNSUP\|_F^2\right) 
    &=
    \frac{2}{n_\PS}(\mG \mZ^\SUP_\mcS -  \mX^\SUP)(\mZ^\SUP_\mcS)^\transp
    \\
    &\quad \quad 
    -\frac{4}{n_\UNSUP} \mX^\UNSUP(\mX^\UNSUP)^\transp\mG
    \\
    &\quad \quad 
    + 
    \frac{2}{n_\UNSUP} \mX^\UNSUP (\mX^\UNSUP)^\transp
    \mG \mG^\transp \mG
    \\
    &\quad \quad 
    +
    \frac{2}{n_\UNSUP} \mG \mG^\transp 
    \mX^\UNSUP (\mX^\UNSUP)^\transp
    \mG.
\end{align*}

The gradient for the loss in \cref{eq:training loss - gradient type 3} is similar. Again, we restate the loss for completeness:
\[
    \mcL^\TRAIN(\mG, \mcD)
    =
    \frac{1}{n_\PS}\|\mG \mZ^\PS_\mcS - \mX^\PS \|_F^2 
    + 
    \frac{1}{n}\|(\mI_d - \mG \mG^\pinv)\mX\|_F^2
\]
The gradient of the first term of \cref{eq:training loss - gradient type 3} is the same as in the above result for the loss in \cref{eq:training loss - gradient type 1}. The gradient of the second term of \cref{eq:training loss - gradient type 3} requires  more work. With $\mB = \mX \mX^\transp$ for shorthand and assuming that $\mG$ has full column rank, we see that
\begin{align*}
    \nabla_\mG \|(\mI_d - \mG \mG^\pinv)\mX\|_F^2
    &=
    \nabla_\mG
    \trace (
    (\mI_d - \mG \mG^\pinv)(\mI_d - \mG \mG^\pinv)\mB
    )
    \\
    &=
    \nabla_\mG
    \trace (
    (\mI_d - 2\mG \mG^\pinv + \mG \mG^\pinv \mG \mG^\pinv )\mB
    )
    \\
    &=
    \nabla_\mG
    \trace (
    (\mI_d - 2\mG \mG^\pinv + \mG \mG^\pinv )\mB
    )
    \\
    &=
    \nabla_\mG
    \trace (
    (\mI_d - \mG \mG^\pinv)\mB
    )
    \\
    &=
    \nabla_\mG
    \trace (
    \mB
    )
    -
    \nabla_\mG
    \trace (
    \mG \mG^\pinv\mB
    )
    \\
    &=
    -
    \nabla_\mG
    \trace ( \mG (\mG^\transp \mG)^{-1}\mG^\transp \mB
    )
    \\
    &=
    -
    \nabla_\mG
    \trace (
    (\mG^\transp \mG)^{-1}\mG^\transp \mB \mG 
    )
    \\
    &=
    -
    \nabla_\mG
    \trace (
    (\mG^\transp \mG)^{-1}\mG^\transp \mB \mG 
    )
    \\
    &=
    2\mG(\mG^\transp \mG)^{-1} \mG^\transp \mB \mG (\mG^\transp \mG)^{-1}
    -
    2 \mB \mG (\mG^\transp \mG)^{-1}.
    &\text{((126) in~\citep{matrix_cookbook})}
\end{align*}
Thus, the total gradient for \cref{eq:training loss - gradient type 3} becomes
\begin{align*}
    \nabla_\mG \left( \frac{1}{n_\PS}\|\mG \mZ^\PS_\mcS - \mX^\PS \|_F^2 
    + 
    \frac{1}{n}
    \|(\mI_d - \mG \mG^\pinv)\mX\|_F^2\right) 
    &=
    \frac{2}{n_\PS}(\mG \mZ^\PS_\mcS -  \mX^\PS)(\mZ^\PS_\mcS)^\transp
    \\
    &\quad \quad 
    +\frac{2}{n}\mG(\mG^\transp \mG)^{-1} \mG^\transp \mX \mX^\transp \mG (\mG^\transp \mG)^{-1}
    \\
    &\quad \quad 
    -
    \frac{2}{n} \mX\mX^\transp \mG (\mG^\transp \mG)^{-1}.
\end{align*}
If $\mG$ has full row rank instead, one gets a similar gradient expression. During the minimization of the loss in \cref{eq:training loss - gradient type 3}, the matrix $\mG$ may become close to low rank and make the gradient calculation unstable. For numerical stability of the gradient, we calculate $(\mG^\pinv)^\transp$ instead of $\mG (\mG^\transp \mG)^\pinv$.

\section{Experiments on nonlinear, multilayer GANs on MNIST}
\label{app:exp on real gans}

In this section we provide details for the experiments on nonlinear, multilayer GANs. 
One of these experiments is discussed in \cref{sec:real_gans} and the other is an additional experiment which is not in the main paper. The details here are relevant to both experiments.

We train a gradient penalized Wasserstein GAN (WGAN-GP)~\citep{gulrajani2017improved} on MNIST~\citep{mnist}. The architecture output directly from PyTorch is shown below with the latent dimensionality changed to $k$, as it varies in our experiments:
\begin{verbatim}
Generator(
  (model): Sequential(
    (0): Linear(in_features=k, out_features=128, bias=True)
    (1): LeakyReLU(negative_slope=0.2, inplace)
    (2): Linear(in_features=128, out_features=256, bias=True)
    (3): BatchNorm1d(256, eps=0.8, momentum=0.1, affine=True, track_running_stats=True)
    (4): LeakyReLU(negative_slope=0.2, inplace)
    (5): Linear(in_features=256, out_features=512, bias=True)
    (6): BatchNorm1d(512, eps=0.8, momentum=0.1, affine=True, track_running_stats=True)
    (7): LeakyReLU(negative_slope=0.2, inplace)
    (8): Linear(in_features=512, out_features=1024, bias=True)
    (9): BatchNorm1d(1024, eps=0.8, momentum=0.1, affine=True, track_running_stats=True)
    (10): LeakyReLU(negative_slope=0.2, inplace)
    (11): Linear(in_features=1024, out_features=784, bias=True)
    (12): Tanh()
  )
)

Discriminator(
  (model): Sequential(
    (0): Linear(in_features=784, out_features=512, bias=True)
    (1): LeakyReLU(negative_slope=0.2, inplace)
    (2): Linear(in_features=512, out_features=256, bias=True)
    (3): LeakyReLU(negative_slope=0.2, inplace)
    (4): Linear(in_features=256, out_features=1, bias=True)
  )
)
\end{verbatim}

The networks are trained with a gradient penalty weight of $\lambda_{\text{GP}} = 10$. The pseudo-supervised sample pairs were fixed as we varied $k$ so that the plots were comparable. However, we ran both of these experiments over $10$ trials, with $10$ sets of pseudo-supervised samples corresponding to $10$ subsets of the training data. Let us denote $\mcL_{\text{GP}}$ as the WGAN-GP objective function. We trained the discriminator as usual, and trained the generator with the following modified objective function:
\[
    \mcL_G(\mX^\text{batch}, \mX^\PS, \mZ^\PS) = \mcL_{\text{GP} }(\mX^\text{batch}) + \| G(\mZ^\PS) - \mX^\PS \|_F^2
\]
with $\mX^\text{batch} \in \bbR^{d\times n_\text{batch size}}, \mX \in \bbR^{d \times n_\PS}$, and $\mZ^\PS \in \bbR^{k \times n_\PS}$ for generator $G$. Additionally, one could weigh this pseudo-supervised term more or less, however we found that a weight of $1$ was adequate to get our results.

For all of our experiments with nonlinear, multilayer GANs, we have a batch size of $4096$, an ADAM learning rate of $0.0002$, ADAM hyperparameters $\beta = (0.5, 0.999)$, and clip value of $0.01$. We train the discriminator $5$ times per iteration. The optimizer values are used for both generator and discriminator. In the main paper, we train for $3000$ iterations and use $4096$ total samples from the training data so that we are performing gradient descent instead of stochastic gradient descent (SGD). We also do experiments for SGD in \cref{app:sgd}.

We measure test error with geometry score~\citep{khrulkov2018geometry} as it is better suited for MNIST than other performance measures, such as Fr\'{e}chet Inception Distance~\citep{FID} and Inception Score~\citep{salimans2016improved} which are better suited for natural images. For our calculation of the geometry score, we pick $L_0 = 32, \gamma = \frac{1}{1000}, i_{\text{max}} = 100,$ and $n = 100$ as done in the original paper when computing scores for the MNIST dataset. Moreover, we generate 10,000 images and compare these generated images to the MNIST test set, which also contains 10,000 images.

In \cref{fig:real_gan_dd_errorbars} we provide errorbars for \cref{fig:real_gan_dd} to show that pseudo-supervision lowers variance.

\begin{figure*}
    \centering
    
    \newcommand\figWidth{0.4}
    \pgfplotsset{
        customAxis/.style={
            xlabel={\LATENTSPACE},
            axis x line*=bottom,
            axis y line*=left,
            ymin=0,
            ymax =1.0,
            ylabel={Test error},
            width=1.1\textwidth,
          },
          cycle list name=colorList,
        }
        
    \begin{subfigure}[t]{\figWidth\textwidth}  
        \begin{tikzpicture}
            \begin{semilogxaxis}[customAxis, title={Epoch = 948 --- baseline},ylabel={Test error}]
                \pgfplotsset{cycle list shift=+1}
                \addplot+[error bars/.cd, y fixed, y dir=both, y explicit]
                table [x=x, y=y, y error=y_std, col sep=comma] {csv/real_gan_baseline_e948.csv};
            \end{semilogxaxis}
        \end{tikzpicture}
    \end{subfigure}
    \begin{subfigure}[t]{\figWidth\textwidth}  
        \begin{tikzpicture}
            \begin{semilogxaxis}[customAxis, title={Epoch = 948 --- pseudo-supervised},ylabel={Test error}]
                \addplot+[error bars/.cd, y fixed, y dir=both, y explicit]
                table [x=x, y=y, y error=y_std, col sep=comma] {csv/real_gan_ps_e948.csv};
            \end{semilogxaxis}
        \end{tikzpicture}
    \end{subfigure}
     
    \begin{subfigure}[t]{\figWidth\textwidth}  
        \begin{tikzpicture}
            \begin{semilogxaxis}[customAxis, title={Epoch = 2,052 --- baseline}]
                \pgfplotsset{cycle list shift=+1}
                \addplot+[error bars/.cd, y fixed, y dir=both, y explicit]
                table [x=x, y=y, y error=y_std, col sep=comma] {csv/real_gan_baseline_e2052.csv};
            \end{semilogxaxis}
        \end{tikzpicture}
    \end{subfigure}
    \begin{subfigure}[t]{\figWidth\textwidth}  
        \begin{tikzpicture}
            \begin{semilogxaxis}[customAxis, title={Epoch = 2,052 --- pseudo-supervised}]
                \addplot+[error bars/.cd, y fixed, y dir=both, y explicit]
                table [x=x, y=y, y error=y_std, col sep=comma] {csv/real_gan_ps_e2052.csv};
            \end{semilogxaxis}
        \end{tikzpicture}
    \end{subfigure}
    
    \begin{subfigure}[t]{\figWidth\textwidth}  
        \begin{tikzpicture} \begin{semilogxaxis}[customAxis, title={Epoch = 3,000 --- baseline}]
                \pgfplotsset{cycle list shift=+1}
                \addplot+[error bars/.cd, y fixed, y dir=both, y explicit]
                table [x=x, y=y, y error=y_std, col sep=comma] {csv/real_gan_baseline_e3000.csv};
            \end{semilogxaxis}
        \end{tikzpicture}
    \end{subfigure}
    \begin{subfigure}[t]{\figWidth\textwidth}  
        \begin{tikzpicture} \begin{semilogxaxis}[customAxis, title={Epoch = 3,000 --- pseudo-supervised}]
                \addplot+[error bars/.cd, y fixed, y dir=both, y explicit]
                table [x=x, y=y, y error=y_std, col sep=comma] {csv/real_gan_ps_e3000.csv};
            \end{semilogxaxis}
        \end{tikzpicture}
    \end{subfigure}
    
    \caption{
    In this figure, we train a WGAN-GP on MNIST using gradient descent on a subset of $4096$ training images. This figure is a more detailed version of \cref{fig:real_gan_dd}.
    }
    \label{fig:real_gan_dd_errorbars}
\end{figure*}
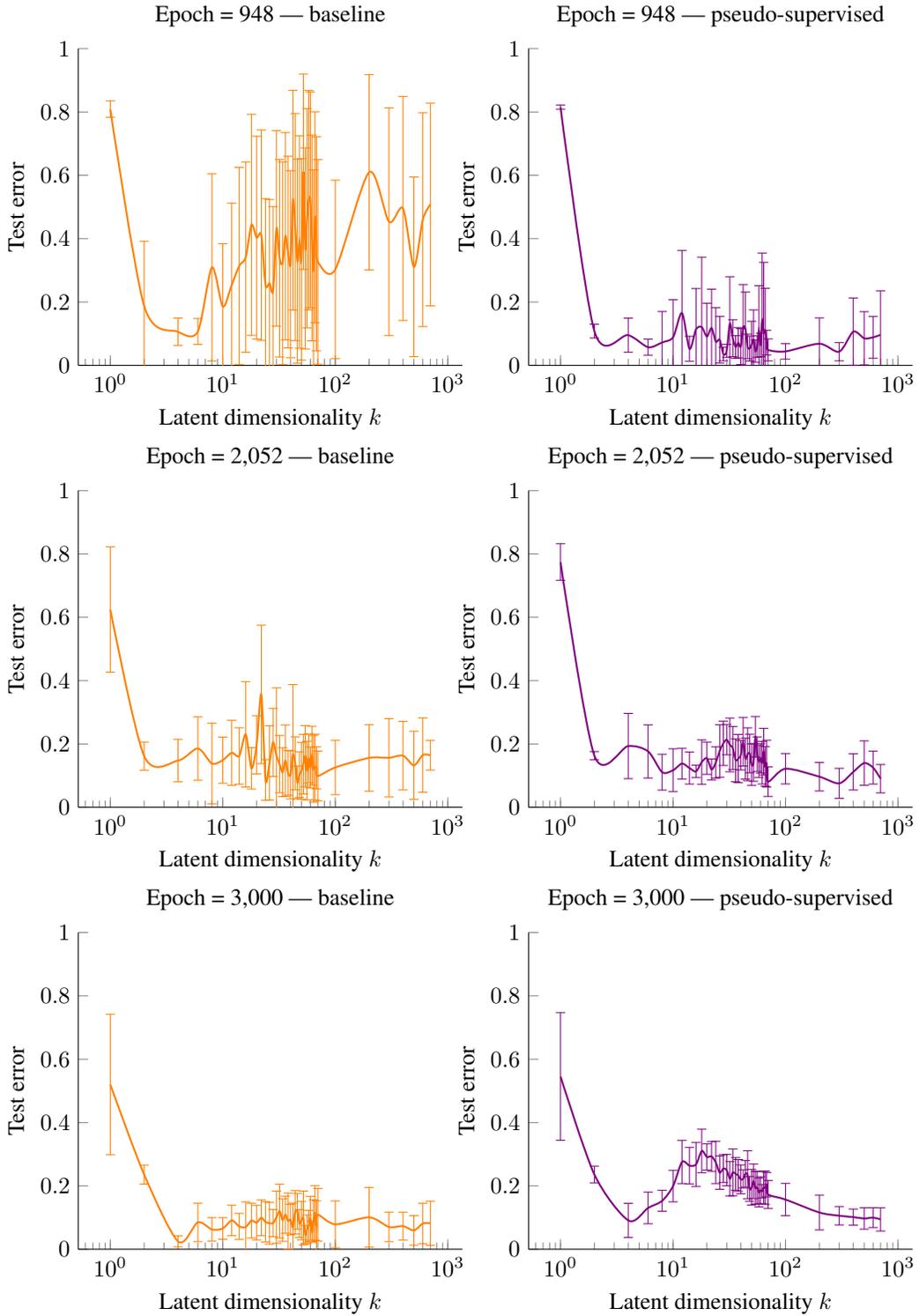

\subsection{Pseudo-supervision with stochastic gradient descent}
\label{app:sgd}
In this section, we train a WGAN-GP just as above except for two changes: we train using all the training data ($60000$ samples) for $200$ iterations using SGD. We only train for $200$ iterations because now each epoch has about $15$ batches instead of the single batch in the previous section.

Our results are shown in \cref{fig:real_gan_sgd_errorbars} and \cref{fig:heatmaps_sgd}. We see that with SGD, we lose the double descent but consistently beat the baseline. We also converge faster than the baseline, but not as fast as the pure gradient descent setting. Moreover, we reduce the variance in the test error across experiments drastically compared to the baseline.

\begin{figure*}
    \centering
    
    \newcommand\figWidth{0.4}
    \pgfplotsset{
        customAxis/.style={
            xlabel={\LATENTSPACE},
            axis x line*=bottom,
            axis y line*=left,
            ymin=0,
            ymax =1.0,
            ylabel={Test error},
            width=1.1\textwidth,
          },
          cycle list name=colorList,
        }
        
    \begin{subfigure}[t]{\figWidth\textwidth}  
        \begin{tikzpicture}
            \begin{semilogxaxis}[customAxis, title={Epoch = 63 --- baseline},ylabel={Test error}]
                \pgfplotsset{cycle list shift=+1}
                \addplot+[error bars/.cd, y fixed, y dir=both, y explicit]
                table [x=x, y=y, y error=y_std, col sep=comma] {csv/real_gan_sgd_baseline_e63.csv};
            \end{semilogxaxis}
        \end{tikzpicture}
    \end{subfigure}
    \begin{subfigure}[t]{\figWidth\textwidth}  
        \begin{tikzpicture}
            \begin{semilogxaxis}[customAxis, title={Epoch = 63 --- pseudo-supervised},ylabel={Test error}]
                \addplot+[error bars/.cd, y fixed, y dir=both, y explicit]
                table [x=x, y=y, y error=y_std, col sep=comma] {csv/real_gan_sgd_ps_e63.csv};
            \end{semilogxaxis}
        \end{tikzpicture}
    \end{subfigure}
     
    \begin{subfigure}[t]{\figWidth\textwidth}  
        \begin{tikzpicture}
            \begin{semilogxaxis}[customAxis, title={Epoch = 137 --- baseline}]
                \pgfplotsset{cycle list shift=+1}
                \addplot+[error bars/.cd, y fixed, y dir=both, y explicit]
                table [x=x, y=y, y error=y_std, col sep=comma] {csv/real_gan_sgd_baseline_e137.csv};
            \end{semilogxaxis}
        \end{tikzpicture}
    \end{subfigure}
    \begin{subfigure}[t]{\figWidth\textwidth}  
        \begin{tikzpicture}
            \begin{semilogxaxis}[customAxis, title={Epoch = 137 --- pseudo-supervised}]
                \addplot+[error bars/.cd, y fixed, y dir=both, y explicit]
                table [x=x, y=y, y error=y_std, col sep=comma] {csv/real_gan_sgd_ps_e137.csv};
            \end{semilogxaxis}
        \end{tikzpicture}
    \end{subfigure}
    
    \begin{subfigure}[t]{\figWidth\textwidth}  
        \begin{tikzpicture} \begin{semilogxaxis}[customAxis, title={Epoch = 200 --- baseline}]
                \pgfplotsset{cycle list shift=+1}
                \addplot+[error bars/.cd, y fixed, y dir=both, y explicit]
                table [x=x, y=y, y error=y_std, col sep=comma] {csv/real_gan_sgd_baseline_e200.csv};
            \end{semilogxaxis}
        \end{tikzpicture}
    \end{subfigure}
    \begin{subfigure}[t]{\figWidth\textwidth}  
        \begin{tikzpicture} \begin{semilogxaxis}[customAxis, title={Epoch = 200 --- pseudo-supervised}]
                \addplot+[error bars/.cd, y fixed, y dir=both, y explicit]
                table [x=x, y=y, y error=y_std, col sep=comma] {csv/real_gan_sgd_ps_e200.csv};
            \end{semilogxaxis}
        \end{tikzpicture}
    \end{subfigure}
    
    \caption{
    In this figure, we train a WGAN-GP on the full MNIST dataset using SGD. The pseudo-supervised GAN has much lower variance and outperforms the baseline later in training. Convergence speed is also faster for the pseudo-supervised model.
    }
    \label{fig:real_gan_sgd_errorbars}
\end{figure*}
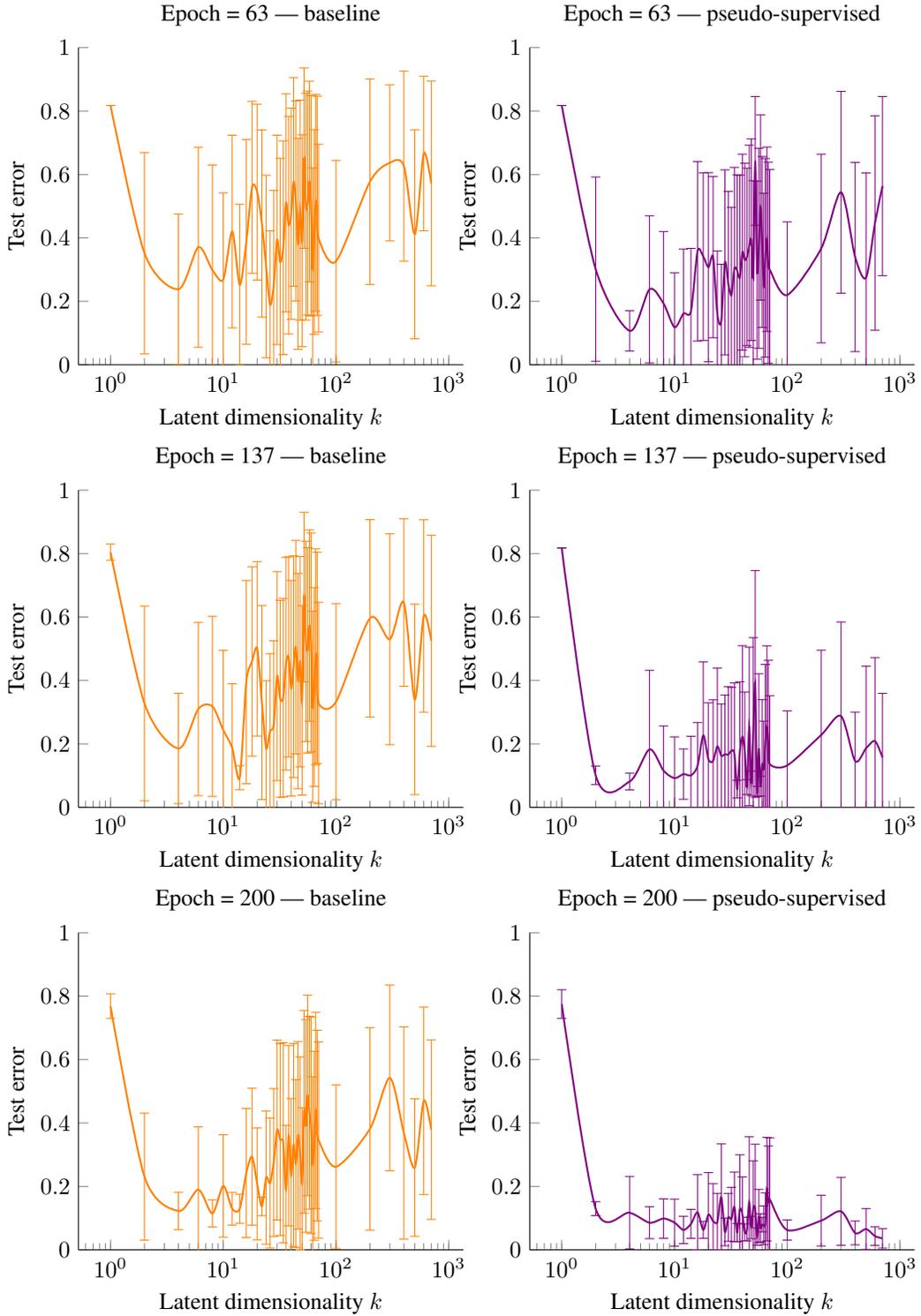

\begin{figure}
    \def\axisWidth{5.5cm}
    \def\heatMax{1.0}
    \centering
    \begin{subfigure}[t]{0.4\textwidth} 
        \begin{tikzpicture}
            \begin{axis}[
                width=\axisWidth,
                xlabel=\LATENTSPACE,
                y label style={at={(-0.05,0.5)}},
                ylabel={Epoch},
                colormap/hot,
                colorbar,
                colorbar style={
                    title={Test Error},
                    yticklabel style={
                        /pgf/number format/.cd,
                        fixed,
                        precision=1,
                        fixed zerofill,
                    },
                },
                title={Baseline},
                enlargelimits=false,
                axis on top,
                point meta min=0,
                point meta max=\heatMax,
                xmin=1,
                xmax=700,
                ymin = 25,
                ymax=200,
                xticklabels={},
                extra x ticks={1,700},
                extra x tick labels={1,700},
                yticklabels={},
                extra y ticks={25,200},
                extra y tick labels={25,200},
            ]
                
                \addplot [matrix plot*,point meta=explicit] file   {csv/baseline_heatmap_sgd.csv};
            \end{axis}
        \end{tikzpicture}
    \end{subfigure}
    \hspace{0.5cm}
    \begin{subfigure}[t]{0.55\textwidth} 
        \begin{tikzpicture}
            \begin{axis}[
                width=\axisWidth,
                xlabel=\LATENTSPACE,
                y label style={at={(-0.05,0.5)}},
                ylabel={Epoch},
                colormap/hot,
                colorbar,
                colorbar style={
                    title={Test Error},
                    yticklabel style={
                        /pgf/number format/.cd,
                        fixed,
                        precision=1,
                        fixed zerofill,
                    },
                },
                title={Pseudo-supervised},
                enlargelimits=false,
                axis on top,
                point meta min=0,
                point meta max=\heatMax,
                xmin=1,
                xmax=700,
                ymin = 25,
                ymax=200,
                xticklabels={},
                extra x ticks={1,700},
                extra x tick labels={1,700},
                yticklabels={},
                extra y ticks={25,200},
                extra y tick labels={25,200},
            ]
                
                \addplot [matrix plot*,point meta=explicit] file   {csv/ps_heatmap_sgd.csv};
            \end{axis}
        \end{tikzpicture}
    \end{subfigure}
    \caption{
    In this figure, we train a WGAN-GP on the full MNIST dataset using SGD. The pseudo-supervised GAN converges to a low error much faster than the baseline. Just as in \cref{fig:heatmaps}, each point in the heatmap is an average test error over $10$ networks The test error is measured by geometry score here.
    The $k$-axis is plotted so that each column corresponds to the next entry for better visualization, even though the spacing is $k \in \{1, 2, 4, 6, \dots, 70, 100, 200, 300, \dots, 700\}$. 
    }
    \label{fig:heatmaps_sgd}
\end{figure}
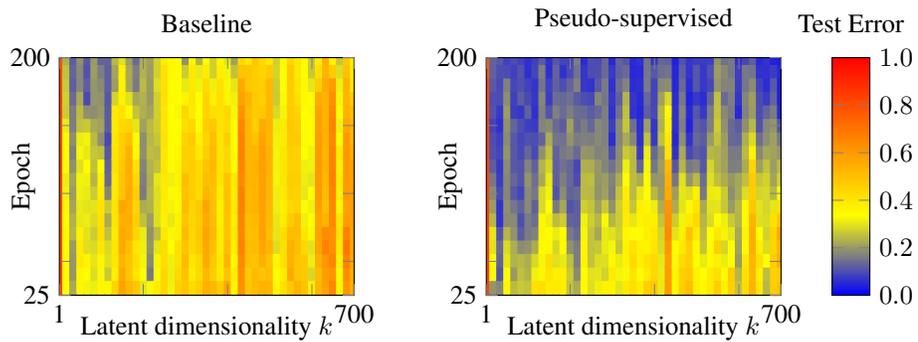

\fi

\end{document}